\theoremstyle:=definition,remark,plain\do{%
        \expandafter\g@addto@macro\csname th@\theoremstyle\endcsname{%
            \addtolength\thm@preskip\parskip
            }%
        }
\newclass{\DEBATE}{DEBATE}
\newclass{\QBF}{QBF}
\newtheorem*{claim}{Claim}
\newtheorem{theorem}{Theorem}
\theoremstyle{definition}
\theoremstyle{theorem}
\newcommand{\catname}[1]{{\normalfont\textbf{#1}}}
\newcommand{\Set}{\catname{Set}\xspace}
\newcommand{\Group}{\catname{Group}\xspace}
\title{AI safety via debate}
\author{Geoffrey Irving\thanks{Corresponding author: \nolinkurl{irving@openai.com}}
    \and Paul Christiano \\ \\ OpenAI \and Dario Amodei}
\date{}
\begin{document}

\maketitle

\begin{abstract}
To make AI systems broadly useful for challenging real-world tasks, we need them to learn complex human goals and preferences.  One approach to specifying complex goals asks humans to judge during training which agent behaviors are safe and useful, but this approach can fail if the task is too complicated for a human to directly judge.  To help address this concern, we propose training agents via self play on a zero sum \emph{debate} game.  Given a question or proposed action, two agents take turns making short statements up to a limit, then a human judges which of the agents gave the most true, useful information.  In an analogy to complexity theory, debate with optimal play can answer any question in $\PSPACE$ given polynomial time judges (direct judging answers only $\NP$ questions).  In practice, whether debate works involves empirical questions about humans and the tasks we want AIs to perform, plus theoretical questions about the meaning of AI alignment.  We report results on an initial MNIST experiment where agents compete to convince a sparse classifier, boosting the classifier's accuracy from 59.4\% to 88.9\% given 6 pixels and from 48.2\% to 85.2\% given 4 pixels.  Finally, we discuss theoretical and practical aspects of the debate model, focusing on potential weaknesses as the model scales up, and we propose future human and computer experiments to test these properties.
\end{abstract}

\section{Introduction} \label{sec:intro}

Learning to align an agent's actions with the values and preferences of humans is a key challenge in ensuring that advanced AI systems remain safe \citep{russell2015}. Subtle problems in alignment can lead to unexpected and potentially unsafe behavior \citep{amodei2016}, and we expect this problem to get worse as systems become more capable. Alignment is a training-time problem: it is difficult to retroactively fix the behavior and incentives of trained unaligned agents. Alignment likely requires interaction with humans during training, but care is required in choosing the precise form of the interaction as supervising the agent may itself be a challenging cognitive task. 

For some tasks it is harder to bring behavior in line with human goals than for others.  In simple cases, humans can directly demonstrate the behavior---this is the case of supervised learning or imitation learning, for example classifying an image or using a robotic gripper to pick up a block.  For these tasks alignment with human preferences can in principle be achieved by imitating the human, and is implicit in existing ML approaches (although issues of bias in the training data still arise, see e.g.\ \citet{mitchell2018fairness}).  Taking a step up in alignment difficulty, some tasks are too difficult for a human to perform, but a human can still judge the quality of behavior or answers once shown to them---for example a robot doing a backflip in an unnatural action space.  This is the case of human preference-based reinforcement learning \citep{christiano2017deep}. We can make an analogy between these two levels and the complexity classes $\P$ and $\NP$: answers that can be computed easily and answers that can be checked easily.

Just as there are problems harder than $\P$ or $\NP$ in complexity theory, lining up behavior with human preferences can be harder still.  A human may be unable to judge whether an explained answer or exhibited behavior is correct: the behavior may be too hard to understand without help, or the answer to a question may have a flaw that is too subtle for the human to detect.  We could imagine a system trained to both give answers and point out flaws in answers; this gives a third level of difficulty. Flaws themselves may be too hard to judge: flaws could have their own flaws that must be pointed out to a human. And flaws of flaws can have flaws, etc. 

This hierarchy of alignment tasks has a natural limit: a debate between competing agents where agents make arguments, other agents poke holes in those arguments, and so on until we have enough information to decide the truth.  The simplest version of debate has two competing agents, though we cover versions with more agents as well.  Our hypothesis is that optimal play in this game produces honest, aligned information far beyond the capabilities of the human judge.  We can approximate optimal play by training ML systems via self play, which has shown impressive performance in games such as Go, chess, shogi, and Dota 2 \citep{silver2016mastering,silver2017mastering,silver2017alphazero,openai2017dota}.

The goal of this paper is to lay out theoretical and practical properties of debate as an approach to AI alignment.  We also lay out plans for experiments to test the properties of debate, but we leave these to future work except for a simple MNIST example.  On the theoretical side, we observe that the complexity class analog of debate can answer any question in $\PSPACE$ using only polynomial time judges, corresponding to aligned agents exponentially smarter than the judge.  Whether debate works with humans and ML is more subtle, and requires extensive testing and analysis before the model can be trusted.  Debate is closely related to the \emph{amplification} approach to AI alignment \citep{christiano2018amplification}, and we explore this relationship in detail.

Our eventual goal is natural language debate, where the human judges a dialog between the agents.  However, unrestricted dialog models remain far from human performance and even evaluation is difficult \citep{lowe2017towards}, so it is instructive to consider debates with non-natural language statements.  Consider the board game Go.  Given a board configuration, we can ask ``What is the best move?''  If AlphaZero \citep{silver2017alphazero} shows us a move, there is no way to judge its quality unless we are roughly as strong as AlphaZero.  However, we can ask another copy of AlphaZero to show us a countermove, then a countermove to that countermove, and so on until the game ends.  Even a novice player can judge this debate: the side with the higher score wins.  Indeed, we can view AlphaZero as our debate model with \emph{human judge} replaced with \emph{rules of Go}, which gives us confidence that the model can achieve superhuman performance.

The layout of this paper is as follows.  \Cref{sec:debate} introduces the debate model of alignment and provides intuitive and theoretical background for why it is useful.  \Cref{sec:experiments} proposes initial experiments to test human and ML aspects of debate, using image tasks to avoid the complexities of natural language.  We present results for an example experiment on MNIST, but we leave the majority of experiments to future work.  The remaining sections provide additional thought experiments and analysis: \cref{sec:optimism} and \cref{sec:worry} discuss reasons for optimism and pessimism about debate as a model for AI alignment, \cref{sec:variants} presents variants of debate to reduce asymmetry or incorporate multiple agents, and \cref{sec:amplification} relates debate to the amplification approach to AI alignment and describes how the two models can be hybridized.  \Cref{sec:conclusion} closes with requests for future work.

\section{The debate game} \label{sec:debate}

We will initially consider a question-answering setting, though \cref{sec:oversimplifications} covers other settings including environment interaction.  We have a set of questions $Q$, answers $A$, and debate statements $S$.  The simplest version of debate has two agents competing to convince a human judge:

\begin{enumerate}
\item A question $q \in Q$ is shown to both agents.
\item \label{answers} The two agents state their answers $a_0, a_1 \in A$ (which may be the same).
\item The two agents take turns making statements $s_0, s_1, \ldots, s_{n-1} \in S$.
\item The judge sees the debate $(q, a, s)$ and decides which agent wins.
\item The game is zero sum: each agent maximizes their probability of winning.
\end{enumerate}

To play this game with a human, we need instructions for how the human should decide who wins.  These instructions are in natural language, such as ``The winner is the agent who said the most useful true thing.''  For question-answering $a \in A$ and $s \in S$ could be any moderate length sentence, and $q \in Q$ a sentence plus additional context.  For example, $q \in Q$ could be ``Should we interview Eve?'' given a resume and links to past work.  At test time it suffices to stop after step 2: we do not need to run the debate (though agents could simulate debates at test time to strengthen answers).

The utility of debate as an approach for AI alignment rests on the following central claim:

\begin{claim} \label{claim}
In the debate game, it is harder to lie than to refute a lie.
\end{claim}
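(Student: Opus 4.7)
The plan is to recognise that the claim is informal and cannot be proved outright; instead I would support it by a formal complexity-theoretic analog together with an informal argument about the structure of arguments, and then mark the boundary where justification must yield to empirics.

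For the formal side, I would map the debate onto the $\PSPACE$ analog mentioned in the abstract: a question corresponds to a quantified boolean formula $\exists x_0 \forall x_1 \exists x_2 \cdots \phi(x)$, the answers pick the outermost existential, and subsequent statements correspond to moves deeper in the associated prover-refuter game. In this setting, ``lying'' means claiming a losing position is winning. A standard induction on game depth shows that from any losing position every move leads to a position winning for the opponent, while from a winning position the current player has at least one move that remains winning. So refutation is a single-move task (exhibit one good counter) whereas sustaining the lie requires a good reply to \emph{every} opponent move, all the way down. The quantifier alternation $\exists$ (liar wins this node) versus $\forall$ (liar survives every reply) is the precise source of the asymmetry, and it scales exponentially in the depth of debate.

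For the informal side, I would argue that natural-language arguments admit a recursive decomposition: a top-level claim rests on sub-claims, which rest on further sub-claims, eventually bottoming out in something the judge can verify directly. If the top claim is a lie, then at least one sub-claim must itself be a lie or a directly detectable falsehood, so the refuter picks that weakest sub-claim and the recursion replays the formal pattern above. Truthful arguments, in contrast, are closed under this decomposition: every branch resolves into something honest, so the truth-teller needs only a single consistent story while the liar must maintain consistency against an adversary that chooses where to probe.

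The main obstacle, and what no proof can settle, is that the claim ultimately speaks about humans, open-ended language, and trained ML agents rather than idealised players on a game tree. Decompositions into sub-claims may not be clean; judges may systematically favour rhetorical plausibility over substance; bounded debate length truncates the recursion so deeply nested lies can escape scrutiny; and some truths may be genuinely harder to articulate than seductive falsehoods. Establishing the claim in practice therefore reduces to the empirical programme of human and ML experiments proposed later in the paper, with the $\PSPACE$ analog giving confidence only that the mechanism is sound in the idealised limit.
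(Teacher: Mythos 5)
Your proposal is correct and takes essentially the same route as the paper: the paper likewise treats the claim as an empirical hypothesis rather than a theorem, supporting it with the $\exists/\forall$ quantifier-alternation analogy of \cref{sec:complexity} (where the liar must defend against every response while the refuter need only exhibit one winning counter, as in the prime-counting example), with the tree-of-arguments decomposition of \cref{sec:powerful}, and by deferring the human and ML questions to experiment. Your explicit induction on game depth is a slightly more formal packaging of the same asymmetry, but the substance and the placement of the empirical boundary match the paper's own treatment.
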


Whether this claim is true for any particular setting is empirical, though we give some evidence for it below.  If the central claim is true, we can hope for a few other claims:

\begin{itemize}
\item In all Nash equilibria\footnote{We say Nash equilibria instead of optimal play since (1) we will consider versions where turns are simultaneous and (2) choosing a limited capacity model can make a perfect information game act as an imperfect information game.} of this game, both agents try to tell the truth in the most convincing manner possible, trying to uncover details or counterarguments the other agent missed.
\item It is possible to find approximate Nash equilibria with some version of gradient descent, such as a self play algorithm similar to \citet{silver2017alphazero}.  In particular, training is stable around the Nash equilibria (unstable training might mean agents learn to be honest, forget how to defend against dishonesty, and then get beaten by dishonest fluctuations).
\item At Nash equilibria, debate agents are approximately as strong as unrestricted AI (agents trained with no safety measures).
\end{itemize}

We emphasize that using debate does not restrict the structure of the agents.  Similarly, the deep networks used in \citet{silver2017alphazero} are convolutional residual networks unrelated to the game tree of Go, though the training process does involve the tree via MCTS.  The lack of restriction is important, since we want safe approaches to be competitive with unsafe alternatives.

\subsection{Short debates are powerful} \label{sec:powerful}

Consider the question ``Where should I go on vacation?''  If one sees only the answer ``Alaska'', it is not obvious whether a better answer exists.  Thus the opening answers in a debate about the vacation question between two agents Alice and Bob might be
\begin{enumerate}
\item \textbf{Alice:} Alaska.
\item \textbf{Bob:} Bali.
\end{enumerate}
If the human judge is able to rank pairs of vacation destinations, perfect play in a two round debate is sufficient to find the optimal destination.  If Alice gives a suboptimal destination, Bob gives a better destination and wins, so Alice only wins by starting with the best destination.  In this case Bali sounds warmer, so Bob wins.  However, Bali has a hidden flaw:
\begin{enumerate}
\setcounter{enumi}{2}
\item \textbf{Alice:} Bali is out since your passport won't arrive in time.
\end{enumerate}
Now the human understands they cannot go to Bali, so Alice wins, unless Bob has a counterargument to the flaw:
\begin{enumerate}
\setcounter{enumi}{3}
\item \textbf{Bob:} Expedited passport service only takes two weeks.
\end{enumerate}
The process continues until we arrive at a statement that the human is able to correctly judge, in the sense that the other agent does not believe they can change the human's mind with yet another statement and resigns.  We do not stop when the human \emph{thinks} they can correctly judge: after step (2) the human may have thought Bali was obviously correct, not remembering the passport issue; after step (3) the human may think Alaska is correct, being unaware of expedited service.

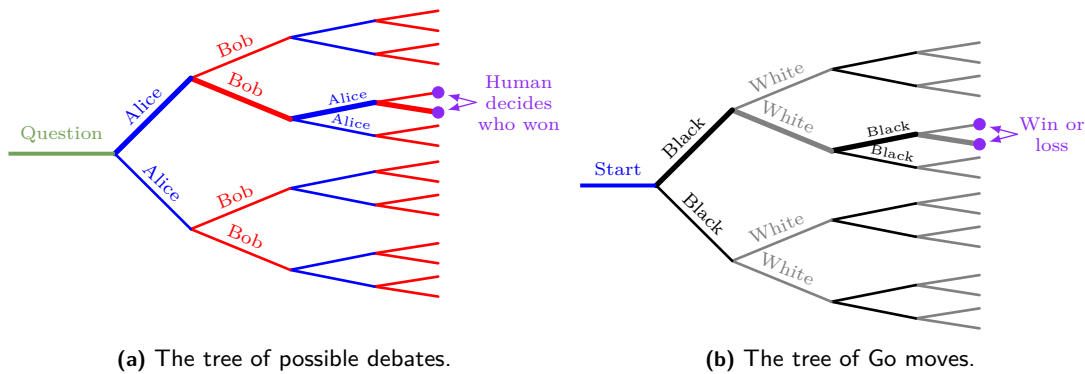
\begin{figure}[t]
\centering
\begin{minipage}[b]{.48\columnwidth}
\centering
\begin{tikzpicture}
\draw[line cap=round,red,line width=1pt] (3.4161948411340948,-1.7619156035725514) -- (4.254276189134547,-1.894654648822036) ;
\draw[line cap=round,red,line width=1pt] (3.4161948411340948,-1.7619156035725514) -- (4.254276189134547,-1.6291765583230668) ;
\draw[line cap=round,blue,line width=1pt] (2.3065629648763766,-1.541196100146197) -- (3.4161948411340948,-1.7619156035725514) ;
\draw[line cap=round,red,line width=1pt] (3.4161948411340948,-1.3204765967198424) -- (4.254276189134547,-1.453215641969327) ;
\draw[line cap=round,red,line width=1pt] (3.4161948411340948,-1.3204765967198424) -- (4.254276189134547,-1.1877375514703579) ;
\draw[line cap=round,blue,line width=1pt] (2.3065629648763766,-1.541196100146197) -- (3.4161948411340948,-1.3204765967198424) ;
\draw[line cap=round,red,line width=1pt] (1.0000000000000002,-1.0) -- (2.3065629648763766,-1.541196100146197) node[red,midway,sloped,above=-0.1em] {\scriptsize Bob};
\draw[line cap=round,red,line width=1pt] (3.4161948411340948,-0.6795234032801574) -- (4.254276189134547,-0.8122624485296419) ;
\draw[line cap=round,red,line width=1pt] (3.4161948411340948,-0.6795234032801574) -- (4.254276189134547,-0.5467843580306728) ;
\draw[line cap=round,blue,line width=1pt] (2.3065629648763766,-0.458803899853803) -- (3.4161948411340948,-0.6795234032801574) ;
\draw[line cap=round,red,line width=1pt] (3.4161948411340948,-0.23808439642744855) -- (4.254276189134547,-0.37082344167693315) ;
\draw[line cap=round,red,line width=1pt] (3.4161948411340948,-0.23808439642744855) -- (4.254276189134547,-0.10534535117796398) ;
\draw[line cap=round,blue,line width=1pt] (2.3065629648763766,-0.458803899853803) -- (3.4161948411340948,-0.23808439642744855) ;
\draw[line cap=round,red,line width=1pt] (1.0000000000000002,-1.0) -- (2.3065629648763766,-0.458803899853803) node[red,midway,sloped,above=-0.1em] {\scriptsize Bob};
\draw[line cap=round,blue,line width=1pt] (0,0) -- (1.0000000000000002,-1.0) node[blue,midway,sloped,above=-0.1em] {\scriptsize Alice};
\draw[line cap=round,red,line width=1pt] (3.4161948411340948,0.23808439642744855) -- (4.254276189134547,0.10534535117796398) ;
\draw[line cap=round,red,line width=1pt] (3.4161948411340948,0.23808439642744855) -- (4.254276189134547,0.37082344167693315) ;
\draw[line cap=round,blue,line width=1pt] (2.3065629648763766,0.458803899853803) -- (3.4161948411340948,0.23808439642744855) node[blue,pos=.7,sloped,above=-0.2em] {\tiny Alice};
\draw[line cap=round,red,line width=2pt] (3.4161948411340948,0.6795234032801574) -- (4.254276189134547,0.5467843580306728) ;
\draw[line cap=round,red,line width=1pt] (3.4161948411340948,0.6795234032801574) -- (4.254276189134547,0.8122624485296419) ;
\draw[line cap=round,blue,line width=2pt] (2.3065629648763766,0.458803899853803) -- (3.4161948411340948,0.6795234032801574) node[blue,pos=.7,sloped,above=-0.2em] {\tiny Alice};
\draw[line cap=round,red,line width=2pt] (1.0000000000000002,1.0) -- (2.3065629648763766,0.458803899853803) node[red,midway,sloped,above=-0.1em] {\scriptsize Bob};
\draw[line cap=round,red,line width=1pt] (3.4161948411340948,1.3204765967198424) -- (4.254276189134547,1.1877375514703579) ;
\draw[line cap=round,red,line width=1pt] (3.4161948411340948,1.3204765967198424) -- (4.254276189134547,1.453215641969327) ;
\draw[line cap=round,blue,line width=1pt] (2.3065629648763766,1.541196100146197) -- (3.4161948411340948,1.3204765967198424) ;
\draw[line cap=round,red,line width=1pt] (3.4161948411340948,1.7619156035725514) -- (4.254276189134547,1.6291765583230668) ;
\draw[line cap=round,red,line width=1pt] (3.4161948411340948,1.7619156035725514) -- (4.254276189134547,1.894654648822036) ;
\draw[line cap=round,blue,line width=1pt] (2.3065629648763766,1.541196100146197) -- (3.4161948411340948,1.7619156035725514) ;
\draw[line cap=round,red,line width=1pt] (1.0000000000000002,1.0) -- (2.3065629648763766,1.541196100146197) node[red,midway,sloped,above=-0.1em] {\scriptsize Bob};
\draw[line cap=round,blue,line width=2pt] (0,0) -- (1.0000000000000002,1.0) node[blue,midway,sloped,above=-0.1em] {\scriptsize Alice};
\definecolor{human}{RGB}{139,40,245};
\definecolor{question}{RGB}{116,161,88};
\node[human,right,font=\scriptsize,align=center] (end) at (4.652493324883001,0.6795234032801574) {Human\\decides\\who won};
\draw[human,fill=human] (4.254276189134547,0.5467843580306728) circle (.075) node (leaf0) {};
\draw[human,line cap=round,line width=.5pt,-latex] (4.752493324883001,0.6795234032801574) -- (leaf0);
\draw[human,fill=human] (4.254276189134547,0.8122624485296419) circle (.075) node (leaf1) {};
\draw[human,line cap=round,line width=.5pt,-latex] (4.752493324883001,0.6795234032801574) -- (leaf1);
\draw[question,font=\scriptsize,line width=1.5pt,-] (-1.4,0)--(0,0) node[above,midway] {Question};
\end{tikzpicture}
\subcaption{The tree of possible debates.}\label{fig:debate-tree}
\end{minipage}%
\begin{minipage}[b]{.48\columnwidth}
\centering
\begin{tikzpicture}
\draw[line cap=round,gray,line width=1pt] (3.4161948411340948,-1.7619156035725514) -- (4.254276189134547,-1.894654648822036) ;
\draw[line cap=round,gray,line width=1pt] (3.4161948411340948,-1.7619156035725514) -- (4.254276189134547,-1.6291765583230668) ;
\draw[line cap=round,black,line width=1pt] (2.3065629648763766,-1.541196100146197) -- (3.4161948411340948,-1.7619156035725514) ;
\draw[line cap=round,gray,line width=1pt] (3.4161948411340948,-1.3204765967198424) -- (4.254276189134547,-1.453215641969327) ;
\draw[line cap=round,gray,line width=1pt] (3.4161948411340948,-1.3204765967198424) -- (4.254276189134547,-1.1877375514703579) ;
\draw[line cap=round,black,line width=1pt] (2.3065629648763766,-1.541196100146197) -- (3.4161948411340948,-1.3204765967198424) ;
\draw[line cap=round,gray,line width=1pt] (1.0000000000000002,-1.0) -- (2.3065629648763766,-1.541196100146197) node[gray,midway,sloped,above=-0.1em] {\scriptsize White};
\draw[line cap=round,gray,line width=1pt] (3.4161948411340948,-0.6795234032801574) -- (4.254276189134547,-0.8122624485296419) ;
\draw[line cap=round,gray,line width=1pt] (3.4161948411340948,-0.6795234032801574) -- (4.254276189134547,-0.5467843580306728) ;
\draw[line cap=round,black,line width=1pt] (2.3065629648763766,-0.458803899853803) -- (3.4161948411340948,-0.6795234032801574) ;
\draw[line cap=round,gray,line width=1pt] (3.4161948411340948,-0.23808439642744855) -- (4.254276189134547,-0.37082344167693315) ;
\draw[line cap=round,gray,line width=1pt] (3.4161948411340948,-0.23808439642744855) -- (4.254276189134547,-0.10534535117796398) ;
\draw[line cap=round,black,line width=1pt] (2.3065629648763766,-0.458803899853803) -- (3.4161948411340948,-0.23808439642744855) ;
\draw[line cap=round,gray,line width=1pt] (1.0000000000000002,-1.0) -- (2.3065629648763766,-0.458803899853803) node[gray,midway,sloped,above=-0.1em] {\scriptsize White};
\draw[line cap=round,black,line width=1pt] (0,0) -- (1.0000000000000002,-1.0) node[black,midway,sloped,above=-0.1em] {\scriptsize Black};
\draw[line cap=round,gray,line width=1pt] (3.4161948411340948,0.23808439642744855) -- (4.254276189134547,0.10534535117796398) ;
\draw[line cap=round,gray,line width=1pt] (3.4161948411340948,0.23808439642744855) -- (4.254276189134547,0.37082344167693315) ;
\draw[line cap=round,black,line width=1pt] (2.3065629648763766,0.458803899853803) -- (3.4161948411340948,0.23808439642744855) node[black,pos=.7,sloped,above=-0.2em] {\tiny Black};
\draw[line cap=round,gray,line width=2pt] (3.4161948411340948,0.6795234032801574) -- (4.254276189134547,0.5467843580306728) ;
\draw[line cap=round,gray,line width=1pt] (3.4161948411340948,0.6795234032801574) -- (4.254276189134547,0.8122624485296419) ;
\draw[line cap=round,black,line width=2pt] (2.3065629648763766,0.458803899853803) -- (3.4161948411340948,0.6795234032801574) node[black,pos=.7,sloped,above=-0.2em] {\tiny Black};
\draw[line cap=round,gray,line width=2pt] (1.0000000000000002,1.0) -- (2.3065629648763766,0.458803899853803) node[gray,midway,sloped,above=-0.1em] {\scriptsize White};
\draw[line cap=round,gray,line width=1pt] (3.4161948411340948,1.3204765967198424) -- (4.254276189134547,1.1877375514703579) ;
\draw[line cap=round,gray,line width=1pt] (3.4161948411340948,1.3204765967198424) -- (4.254276189134547,1.453215641969327) ;
\draw[line cap=round,black,line width=1pt] (2.3065629648763766,1.541196100146197) -- (3.4161948411340948,1.3204765967198424) ;
\draw[line cap=round,gray,line width=1pt] (3.4161948411340948,1.7619156035725514) -- (4.254276189134547,1.6291765583230668) ;
\draw[line cap=round,gray,line width=1pt] (3.4161948411340948,1.7619156035725514) -- (4.254276189134547,1.894654648822036) ;
\draw[line cap=round,black,line width=1pt] (2.3065629648763766,1.541196100146197) -- (3.4161948411340948,1.7619156035725514) ;
\draw[line cap=round,gray,line width=1pt] (1.0000000000000002,1.0) -- (2.3065629648763766,1.541196100146197) node[gray,midway,sloped,above=-0.1em] {\scriptsize White};
\draw[line cap=round,black,line width=2pt] (0,0) -- (1.0000000000000002,1.0) node[black,midway,sloped,above=-0.1em] {\scriptsize Black};
\definecolor{human}{RGB}{139,40,245};
\node[human,right,font=\scriptsize,align=center] (end) at (4.652493324883001,0.6795234032801574) {Win or\\loss};
\draw[human,fill=human] (4.254276189134547,0.5467843580306728) circle (.075) node (leaf0) {};
\draw[human,line cap=round,line width=.5pt,-latex] (4.752493324883001,0.6795234032801574) -- (leaf0);
\draw[human,fill=human] (4.254276189134547,0.8122624485296419) circle (.075) node (leaf1) {};
\draw[human,line cap=round,line width=.5pt,-latex] (4.752493324883001,0.6795234032801574) -- (leaf1);
\draw[blue,font=\scriptsize,line width=1.5pt,-] (-1,0)--(0,0) node[above,midway] {Start};
\end{tikzpicture}
\subcaption{The tree of Go moves.}\label{fig:go-tree}
\end{minipage}
\caption{\subref{fig:debate-tree} Given a question, two debating agents alternate statements until a limit is reached, and a human judges who gave the most true, useful information.  Although the \emph{reason} for the correct answer may be the exponentially large tree of all possible debates, a single path chosen by strong agents provides evidence for the whole tree.  \subref{fig:go-tree} Analogously, although the best move in Go depends on the entire tree, a single game with strong players is evidence as to which moves were best. \label{fig:tree}}
\end{figure}

What we do not do is list all arguments relevant to the vacation question: Bali has nicer beaches, Alaska is colder on average but pleasantly warm in the summer, it is currently February, Hawaii is both warm and in the U.S.  Instead, optimal play in debate picks out a single line of argument, and agents which switch arguments admit defeat and lose.  This is unnatural in a human discussion where both sides learn from each other over the course of the debate, but we are interested in the equilibrium of training where both agents are assumed to be using the best arguments available.  For example, if the third statement had been
\begin{enumerate}
\setcounter{enumi}{2}
\item \textbf{Alice:} Wait, no...Hawaii!
\end{enumerate}
then Bob should win, since Bali was a sufficient counterargument to Alaska.

\Cref{fig:debate-tree} illustrates this situation.  A single round of the debate game traces out one path through the space of all possible arguments.  The \emph{reason} for the answer is the entire tree: all points and counterpoints relevant to deciding the issue.  The tree is too large to show to a human, but a single path through the tree chosen by sufficiently strong adversarial agents is evidence of the result from the entire tree.  \Cref{fig:go-tree} has the analogous situation for Go: the correct first move is determined by the entire tree, but one game between strong players provides evidence as to the correct move.

Thus debates can be \emph{short} because they are \emph{unbranched}: they cover only one path through the tree.  Long arguments are usually long only because they cover many different arguments and subarguments: the length is due to branching down many paths.  Arguments which seem irreducibly long because they are phrased as a sequential process can be rearranged into a shallow tree by stating the conclusion of the first half of the argument, then choosing which half to discuss in detail.  We can make this rearrangement precise in the complexity theory setting, as we discuss next.

\subsection{Complexity theory analogies: $\DEBATE = \PSPACE$} \label{sec:complexity}

Although debate is intended for use with fuzzy humans as judges, we can gain intuition about the model by replacing the human with an arbitrary polynomial time algorithm $H : Q \to \{0,1\}$ which takes some input statement $q \in Q$ and produces one bit: whether the statement is true or false.  We allow our ML models arbitrary computational power: the only limitation is the supervision signal.

\begin{table}[t]
\centering
\begin{tabular}{|c|r@{\hskip 0.03cm}l|r|c|}
\hline
Steps & \multicolumn{2}{|c|}{Formula} & Complexity class & ML algorithm \\
\hline
0 & & $H(q)$ & $\P = \Sigma_0 \P$ & supervised learning (SL) \\
1 & $\exists x.$ & $H(q, x)$ & $\NP = \Sigma_1 \P$ & reinforcement learning (RL) \\
2 & $\exists x \forall y.$ & $H(q, x, y)$ & $\Sigma_2 \P$ & two round games \\
\vdots & & \vdots & \vdots & \vdots \\
$n$ & $\exists x_0 \forall x_1 \cdots \exists x_{n-1}.$ & $H(q, x_0, \ldots)$ & $\Sigma_n \P$ & $n$ round games \\
poly & $\exists x_0 \forall x_1 \cdots.$ & $H(q, x_0, \ldots)$ & \PSPACE & variable round games \\
\hline
\end{tabular}
\caption{As we increase the number of steps, the complexity class analog of debate moves up the polynomial hierarchy.  A fixed number of steps $n$ gives the polynomial hierarchy level $\Sigma_n \P$, and a polynomial number of steps gives $\PSPACE$.} \label{tab:complexity}
\end{table}

If we use $H$ to answer questions directly as $H(q)$, we get the complexity class $\P$ of polynomial time algorithms.  As discussed in \cref{sec:intro}, this setup corresponds to supervised learning.  With sufficient training data and model capacity we can fit any algorithm, but we cannot go beyond the training data (except by removing unsystematic errors).

If we use the polynomial time algorithm not to output answers but to judge them, we get the complexity class $\NP$ of questions with polynomial time checkable witnesses.  Instead of $H(q)$ we output $\exists x. H(q, x)$ where $x$ is a witness found by the ML model.  $\NP$ corresponds to single agent reinforcement learning: an agent with sufficient capacity can solve tasks the human cannot, but the human must be able to judge whether the solution is good.

Now consider an adversarial debate of length two, where Alice chooses $x$ attempting to make the human say yes, Bob chooses $y$ attempting to make the human say no, and the human decides who is correct.  The result is $\exists x \forall y. H(q, x, y)$.  Alice wins if she can find $x$ such that all responses $y$ by Bob have $H(q, x, y) = 1$.  Bob wins if he can find a response $y$ to any $x$ that Alice says so that $H(q, x, y) = 0$.  This complexity class is $\Sigma_2 \P$, two steps up the polynomial hierarchy, since $\Sigma_2 \P$ contains all questions answerable as formulas of the form $\exists x \forall y. H(q, x, y)$ for polynomial time $H$.

We can continue this process for any number of rounds, with Alice and Bob alternating points and counterpoints, producing the formula $\exists x_0 \forall x_1 \cdots \exists x_{n-1}. H(q, x_0, \ldots)$ for $n$ rounds of debate.  If $n$ is fixed, the complexity class is $\Sigma_n \P$: $n$ steps up the polynomial hierarchy $\PH = \Sigma_0 \P \cup \Sigma_1 \P \cup \cdots$.  If the number of rounds $n$ is allowed to grow polynomially in the size of the question $q$, the complexity class is $\PSPACE$: all questions decidable by polynomial space algorithms \citep{sipser2013computation}.  \Cref{tab:complexity} shows the progression.  To summarize,

\begin{theorem} For any problem $L \in \PSPACE$, we can choose a polynomial time judge so that optimal play in the debate game with polynomially many steps solves $L$. \end{theorem}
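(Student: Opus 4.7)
The plan is to invoke the standard Stockmeyer--Meyer characterization of $\PSPACE$ via true quantified Boolean formulas ($\QBF$) and observe that the debate protocol is exactly an alternating polynomial-time game.

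First, I would use the fact that $\QBF$ is $\PSPACE$-complete under polynomial-time many-one reductions. For $L \in \PSPACE$, fix a reduction $r$ that maps each instance $q$ to a fully quantified Boolean formula
\[
\Phi_q \;=\; \exists x_0\,\forall x_1\,\exists x_2\,\cdots\, Q_{n-1} x_{n-1}.\;\phi_q(x_0,\dots,x_{n-1}),
\]
where the number of quantifier blocks $n$, the lengths $|x_i|$, and the size of the Boolean matrix $\phi_q$ are all bounded by a polynomial in $|q|$, and where the alternation $\exists\forall\exists\cdots$ matches the alternation of speakers in the debate (we may insert dummy variables to force this pattern if $r$ naturally produces consecutive quantifiers of the same kind).

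Next, I would define the judge by $H(q, x_0, \dots, x_{n-1}) := \phi_q(x_0,\dots,x_{n-1})$. This runs in polynomial time, since $r(q)$ is polynomial-time computable and evaluating a Boolean formula on a supplied assignment is linear in its size. I set up the debate so that Alice (who wants $H=1$) emits the even-indexed statements $x_0, x_2, \dots$, playing the existential quantifiers, while Bob (who wants $H=0$) emits the odd-indexed statements $x_1, x_3, \dots$, playing the universal quantifiers.

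Finally, by the standard inductive semantics of $\QBF$ viewed as an alternating game, Alice has a winning strategy in this debate iff $\Phi_q$ is true iff $q \in L$; in particular, optimal play outputs the correct bit. Since $n$ is polynomial in $|q|$, this gives a polynomial-round debate with a polynomial-time judge, as the theorem requires. The main obstacle is really just bookkeeping: one has to verify that the debate model from \cref{sec:debate} (strictly alternating statements from an arbitrary set $S$, a single final verdict based on the whole transcript) matches the alternating-quantifier game semantics exactly, and that the quantifier order in $\Phi_q$ can be aligned with the speaker order. No deeper difficulty arises, because both the reduction to $\QBF$ and the alternating-game characterization of $\PSPACE$ are classical.
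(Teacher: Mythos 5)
Your proposal is correct and follows essentially the same route as the paper: both reduce $L$ to a true-quantified-Boolean-formula instance via the $\PSPACE$-completeness of $\QBF$ (the paper cites Theorem 8.9 of Sipser), take the polynomial-time matrix as the judge $H$, and identify the alternating quantifiers with the alternating debaters. Your additional remarks about padding with dummy variables to enforce strict alternation and about the game semantics of $\QBF$ are just the bookkeeping the paper leaves implicit in its phrase ``as above.''
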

\begin{proof}
Given $L$, by Theorem 8.9 of \citet{sipser2013computation} we can choose $H$ and $n = \textrm{poly}(|q|)$ such that $q \in L$ iff $\exists x_0 \forall x_1 \cdots \exists x_{n-1}. H(q, x_0, \ldots)$.  As above, debate with $H$ as the judge answers $q \in L$.
\end{proof}

$\PSPACE$ is a powerful complexity class.  Examples of $\PSPACE$ problems include
\begin{itemize}
\item Simulating a temporal process (e.g.\ physics) for an exponential number of steps, as long as the state is polynomial size.
\item Recursion over an exponentially large tree, as long as the depth is polynomial size.
\item Perfect play in any competitive game of polynomial length, such as turn limited Go.
\item Counting an exponentially large collection of objects, where each object is polynomially checkable (such counting problems are also in the smaller complexity class $\#\P \subsetneq \PSPACE$).
\end{itemize}
For an example of the latter, consider counting the exact number of primes $\pi(x) = \pi(2^n)$ up to $n$ bits.  We can check individual numbers for primality in polynomial time, but there are $2^n$ numbers to check and the best known algorithm is $O(2^{\frac{n}{2}+\epsilon})$ \citep{lagarias1987computing}.  The problem is in $\PSPACE$ since we can loop over the exponentially many numbers and maintain the polynomial size count.  To solve this problem using debate, we ask Alice to compute $\pi(x)$ and $\pi(x/2)$.  If Alice is lying about $\pi(x)$, she must be lying about one of the half ranges $[0,x/2)$ and $[x/2, x)$; Bob can point out which range $[a, b)$ is a lie.  Alice must then justify herself by computing $\pi(\frac{a+b}{2})$, Bob points out which of $[a, \frac{a+b}{2})$, $[\frac{a+b}{2},b)$ is a lie, and so on until Alice is lying or not about a singleton interval $\{p\}$.  $H$ then checks $p$ for primality to determine who wins.  Here the judge must perform only addition and subtraction as the steps progress, and check a single number for primality at the end.  Optimal play in this game is for Alice to tell the truth about $\pi(x)$ at the start: Bob has no winning response.

Similarly, if we have a polynomial time algorithm for one step of a simulation, we can use debate to evolve the simulation for an exponential number of steps $n$.  Alice gives the result after $n$ and $n/2$ steps, Bob says which half interval is a lie, Alice gives the midpoint, and we iterate until Alice and Bob are making different claims about a single step of simulation.\footnote{Unfortunately this debate protocol for simulation works only for deterministic simulation (see \cref{sec:randomness}).}

The game case is particularly natural: two agents can debate whether the game of Go is a win or a loss for the first player by playing a game of Go.  The human judge checks the rules and counts the score to see who wins.  Analogously, an amateur Go player can estimate which of two professional players is better by watching them play one game.  The winner is likely better, even though the amateur cannot evaluate the moves directly.

These complexity class arguments are analogies only: we do not expect tractable machine learning algorithms to achieve all of $\PSPACE$.  Rather, the analogies show that at least in theory we can be limited only by the capacity of the ML models and our ability to train them, not the supervisory signal.  This gives us hope that debate could resolve AI alignment without sacrificing model strength.

\subsection{Removing oversimplifications} \label{sec:oversimplifications}

The simple version of debate discussed in \cref{sec:debate} does not capture many tasks we care about.  There are several directions in which we can improve the model:

\begin{itemize}
\item \textbf{Queries may be too big:}
A question may be too large to show to a human, or to expect the human to comprehend.  $q \in Q$ might include a 10 gigapixel image (``Does this picture of the sky support cosmic inflation?''), the whole internet (``What is the best article about cats?''), etc. To support large context, we let the agents reveal small parts of $q$ in their statements.  For example, an agent could reveal part of a webpage or a small region of the 10 gigapixel image.  An extreme example is where $q \in Q$ contains an image and agents can reveal only one pixel of the image (\cref{sec:human}). Debate only works if this revealed context cannot be a lie: it is impossible to usefully debate whether a coin flip is heads or tails if the coin is never revealed.

\item \textbf{Answers may be too big:} Similarly, the best answer to a question may be prohibitively large.  For example, the answer may be a long document: ``Give a 100 page summary of the effectiveness of quantitative easing.''  To support large responses, we still have agents choose a response at the start of the game and allow them to reveal small parts of either answer in the same way they reveal parts of the question. If Bob is misaligned and constructs an answer which would have bad consequences, Alice will be able to safely warn about the risks during the debate while most of the answer is hidden (as long as our safety claims are true).

\item \textbf{Human time is expensive:}
We may lack enough human time to judge every debate, which we can address by training ML models to predict human reward as in \citet{christiano2017deep}.  Most debates can be judged by the reward predictor rather than by the humans themselves. Critically, the reward predictors do not need to be as smart as the agents by our assumption that judging debates is easier than debating, so they can be trained with less data.  We can measure how closely a reward predictor matches a human by showing the same debate to both.

\item \textbf{Environment interaction:}
If we want a system to take actions that affect the environment such as operating a robot, the desired output is a sequence of actions $a_0, a_1, \ldots$ where each action can only be computed once the previous action is taken.  In this case we need one debate per action, though at test time we can again stop once the actions are computed.

\item \textbf{Long term state:}
Each debate is an independent game, but agents could use information from past debates to make better decisions as in \citet{mishra2017simple}.  Since games are zero sum this does not change optimal play, as any equilibria in an iterated zero sum game is weakly dominated by a strategy which treats each game as separate.
\end{itemize}

We will use the large query case for our experiment, but leave other generalizations to future work.

\section{Experiments: proposed and preliminary} \label{sec:experiments}

Since the theoretical limit of infinite compute agents and polynomial time judges gives all of $\PSPACE$, whether debate works in practice as a method for AI alignment is an empirical question about practical ML systems and actual human judges.  Thus, we need experiments to explore whether near optimal play produces aligned behavior with human judges, and whether we can find near optimal play with practical ML.

Although we believe the best equilibrium for debate is two honest agents, in the experiments discussed here we prespecify one agent as honest and one as liar.  This is particularly useful with human agents, since humans may be intrinsically inclined to tell the truth in ways ML agents would not be.  If the honest agent wins against any liar in an experiment, we will have shown that honesty is the optimal strategy, and that an agent trained to optimal play would become honest.

A key goal of debate is aligning ML systems more capable than humans at a particular task, which is difficult to arrange directly since existing systems are mostly weaker than humans.  Instead, we can give the agents access to a source of information that the judge cannot see, and limit the amount of information conveyed in the debate.  That is, we replace ``thinks better'' with ``knows more''.

Ideally we want experiments that combine both human and ML aspects of debate.  Until natural language capabilities progress further, however, there are aspects of debate with human judges more easily tested using human debaters as well.  Thus we consider both pure human experiments which mix natural language and images, and pure ML experiments using images alone.

\subsection{ML experiment: convincing a sparse classifier} \label{sec:sparse}

Metaphorically, debate targets the situation where the agents see the big picture, but the judge cannot.  We can model this by replacing the metaphorical big picture with an actual picture, chosen at random from a distribution.  If the image is simple enough, we can do away with natural language entirely: the two agents state their claimed image class up front, then reveal one pixel of the image per turn to the judge.  They cannot lie about these pixels, but a dishonest player could choose them adversarially to fool the judge.  The game ends after a fixed number of turns, then the judge sees a sparse mask of the revealed pixels and decides which agent's class is correct.

\begin{figure}[t]
\centering
\input{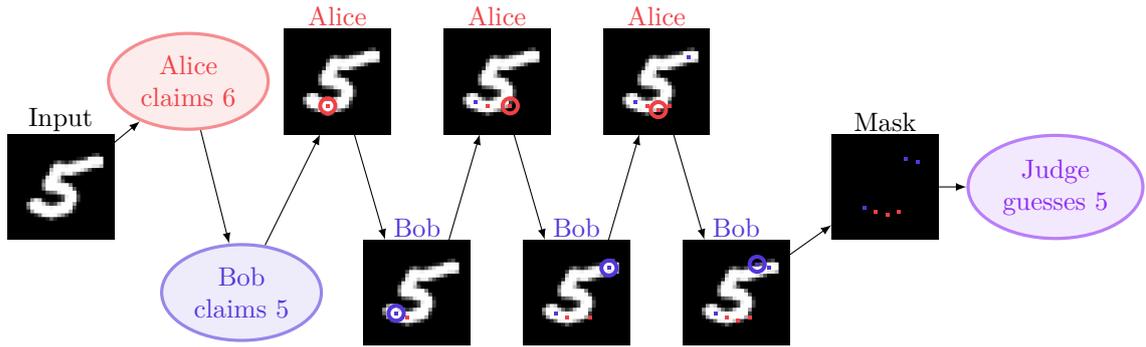}
\caption{The MNIST debate game.  A random MNIST image is shown to the two debating agents but not the judge.  The debaters state their claimed label up front, then reveal one nonzero pixel per turn to the judge up to a total of 4 or 6.  The judge sees the sparse mask of 4 or 6 pixels and chooses the winner based on which of the two labels has higher logit.  The judge is trained in advance to recognize MNIST from random masks of nonzero pixels.} \label{fig:turns} 
\end{figure}

We have performed an initial version of this experiment on MNIST, using ML agents and an ML judge.  \Cref{fig:turns} shows the structure of the game.  Similar to \citet{milli2017interpretable}, we train the judge in advance with supervised learning, using a convolutional net that sees a sparse mask of a few pixels.  The judge is treated as fixed when computing optimal play by the agents.  Joint training of agents and judge together would be particularly bad as a test of debate, since it assumes knowledge of the truth at game time which is exactly what we hope to do without.

\begin{table}[t]
\centering
\begin{tabular}{|c|c|c|c|c|}
\hline
\multirow{2}{*}{Pixels} & \multirow{2}{*}{First player} & Judge accuracy & \multicolumn{2}{|c|}{Honest win rate} \\ \cline{4-5}
& & (random pixels) & No precommit & Precommit \\
\hline
	& honest	&  		& 51.0\%	& 83.8\% \\
4	& liar 	& 48.2\%	& 68.4\%	& 86.7\% \\
	& mean 	& 		& 59.7\%	& \textbf{85.2\%} \\
\hline
	& honest 	& 		& 67.4\%	& 87.4\% \\
6	& liar 	& 59.4\%	& 81.5\%	& 90.4\% \\
	& mean 	&		& 74.4\%	& \textbf{88.9\%} \\
\hline
\end{tabular}
\caption{Results for debate on MNIST.  We prespecify one player as honest and one as liar: when the honest player wins honesty is the best strategy.  \emph{No precommit} means the liar wins for any incorrect guess by the judge, even if the incorrect guess differs for different parts of the game tree.  Lying is harder in the \emph{Precommit} case, where the liar states their claim in advance of making moves and cannot change their story.  With or without precommit, the honest player wins more often than a random judge, showing that honesty has an advantage.} \label{tab:results}
\end{table}

Concretely, the judge is trained to classify MNIST from 6 (resp.\ 4) nonzero pixels, with the pixels chosen at random at training time.  The judge receives two input feature planes: a $\{0,1\}$ mask of which pixels were revealed and the value of the revealed pixels (with zeros elsewhere).  We used the architecture from the TensorFlow MNIST layers tutorial;\footnote{See \url{https://www.tensorflow.org/tutorials/layers\#building_the_cnn_mnist_classifier}.} the only difference is the input.  We train the judges using Adam with a learning rate of $10^{-4}$ for 30k (resp.\ 50k) batches of 128 samples, reaching 59.4\% (resp.\ 48.2\%) accuracy.

With the judge fixed, we define the debate game as shown in \cref{fig:turns}.  The two players precommit to their claimed label up front, then reveal one pixel per turn up to a total of 6 (resp.\ 4).  In the 6 pixel case, this means each player reveals 3 pixels.  We prespecify one player to be honest and one to lie: if the honest player wins this implies honesty is the optimal strategy.  The winner is the player whose claimed class has higher logit when the sparse mask is shown to the judge, ignoring the logits for the other 8 classes.  We also consider debate without precommit: the judge uses the maximum logit over all 10 classes and any incorrect guess by the judge is a win for the liar.  Precommit makes the game harder for a liar even though they can choose the precommitted lie arbitrarily, since the liar cannot adapt the lie based on moves already played.  Precommit more accurately reflects the intended behavior of debate as described in \cref{sec:powerful}; we include the harder version for illustration.  Both versions give an advantage to the second player to move: if the first player has a winning strategy, the second player can steal that strategy by ignoring the previous move on each turn.

\begin{figure}[t!]
\centering
\begin{minipage}[b]{.5\columnwidth}
\centering
\includegraphics[width=.95\columnwidth]{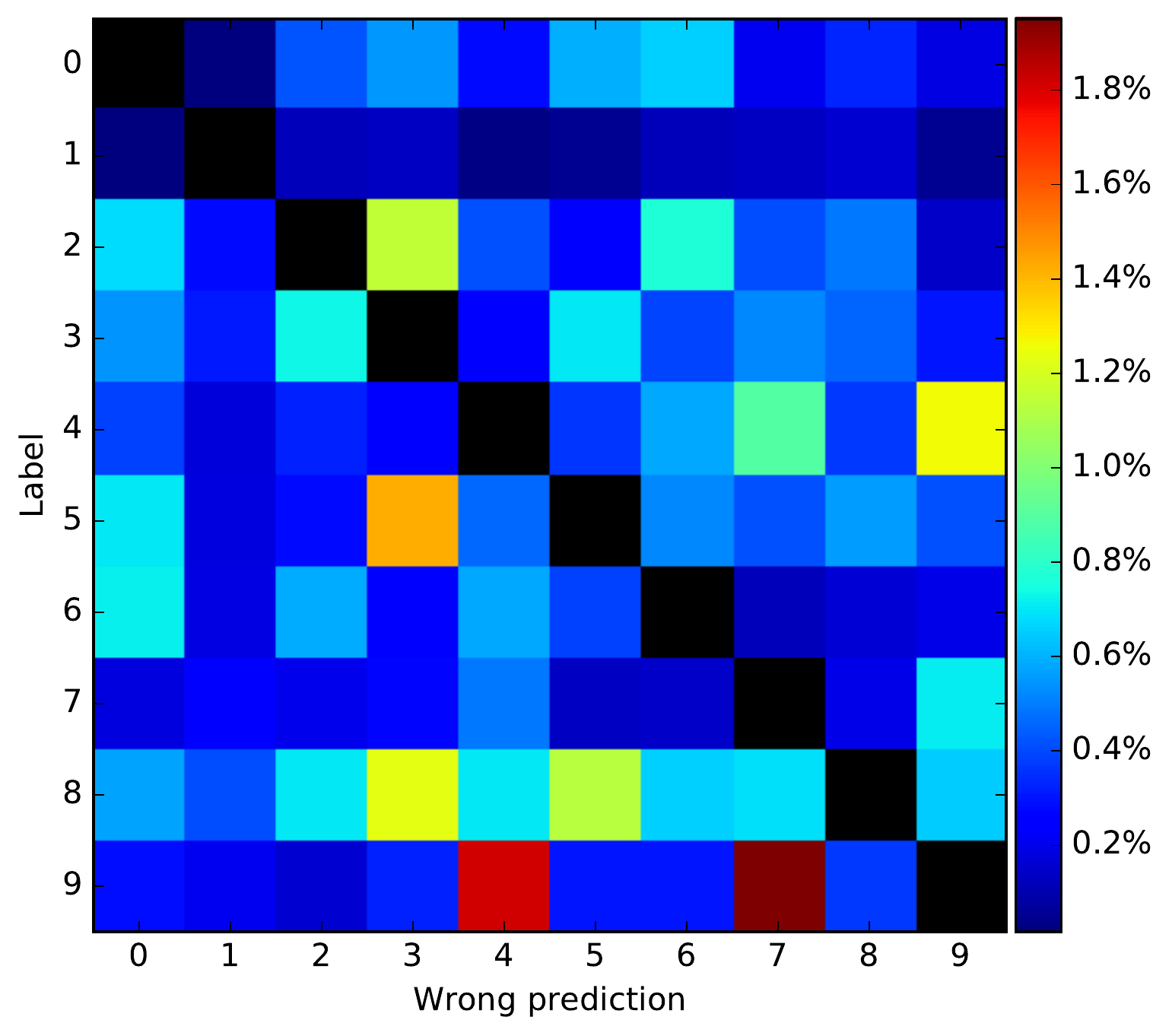}
\subcaption{Judge errors for random 6 pixel masks.}\label{fig:confusion-judge}
\end{minipage}%
\begin{minipage}[b]{.5\columnwidth}
\centering
\includegraphics[width=.95\columnwidth]{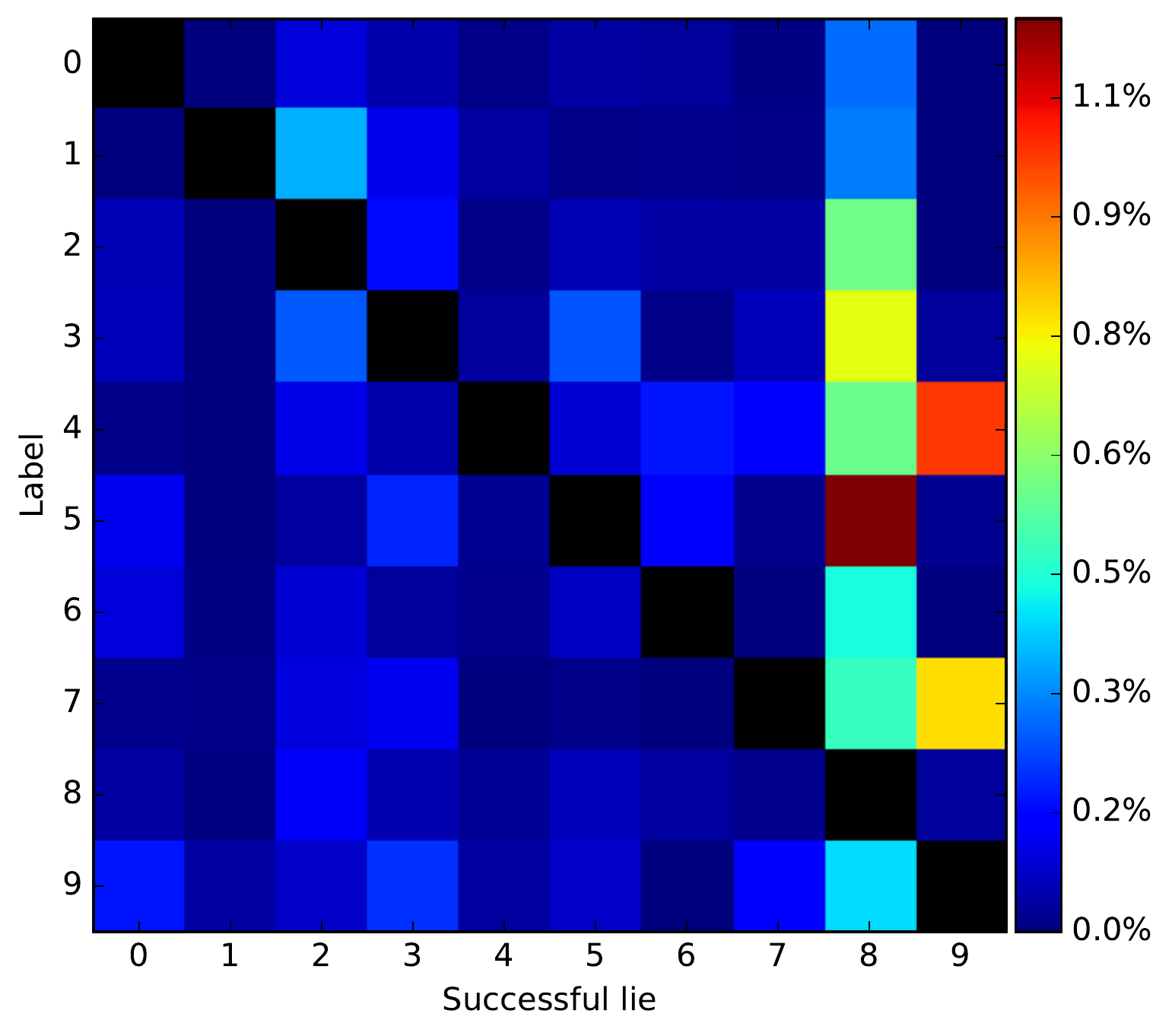}
\subcaption{Successful lies in debate with precommit.}\label{fig:confusion-play}
\end{minipage}
\caption{Confusion matrices with diagonal set to zero for the 6 pixel MNIST judge \subref{fig:confusion-judge} on random pixels and \subref{fig:confusion-play} for debate with precommit, which correspond to accuracies 59.4\% and 88.9\%, respectively.  Errors for random masks are distributed somewhat randomly, while successful lies in debate are concentrated on digits 8 and 9.  For example, it is particularly easy to convince the judge that a 5 is actually an 8, or a 4 is a 9.  This is likely an artifact of restricting the game to nonzero pixels: an honest debater could prove a digit is not 8 by showing black where 8 would be white.} \label{fig:confusion}
\end{figure}

The MNIST debate game is simple enough that we can play it with pure Monte Carlo Tree Search \citep{coulom2006efficient} without training a heuristic as in \citet{silver2017mastering}.  We use 10k rollouts per move, where each rollout descends to a leaf for evaluation using the judge.  During rollouts, we select nodes to expand using the PUCT variant in \citet{silver2017mastering}: at node $s$ we pick action $a$ to maximize
\begin{align*}
U(s, a) &= c_{\textrm{puct}} P \frac{\sqrt{\sum_b N(s, b)}}{1 + N(s, a)}
\end{align*}
where $c_{\textrm{puct}} = 1$, $P = 1/(\textrm{\#nonzero pixels})$ is constant since we do not train a heuristic, and $N(s,a)$ is the visit count.  Ties are broken randomly.
We play out games by choosing moves with the highest visit count.  To model precommit, we play 9 different games for the same image with the 9 possible lies; the liar wins if any lie wins.  Taking the best liar performance over 9 games gives an advantage to the liar since it is a minimum over noisy MCTS; we reduce this noise and better approximate optimal play by taking the mean over 3 games with different seeds for each lie.  Since we use MCTS on the test set with full access to the judge, we are modeling the limit of debate agents with no generalization error (though the judge does have generalization error).

\begin{figure}[t]
\centering
\input{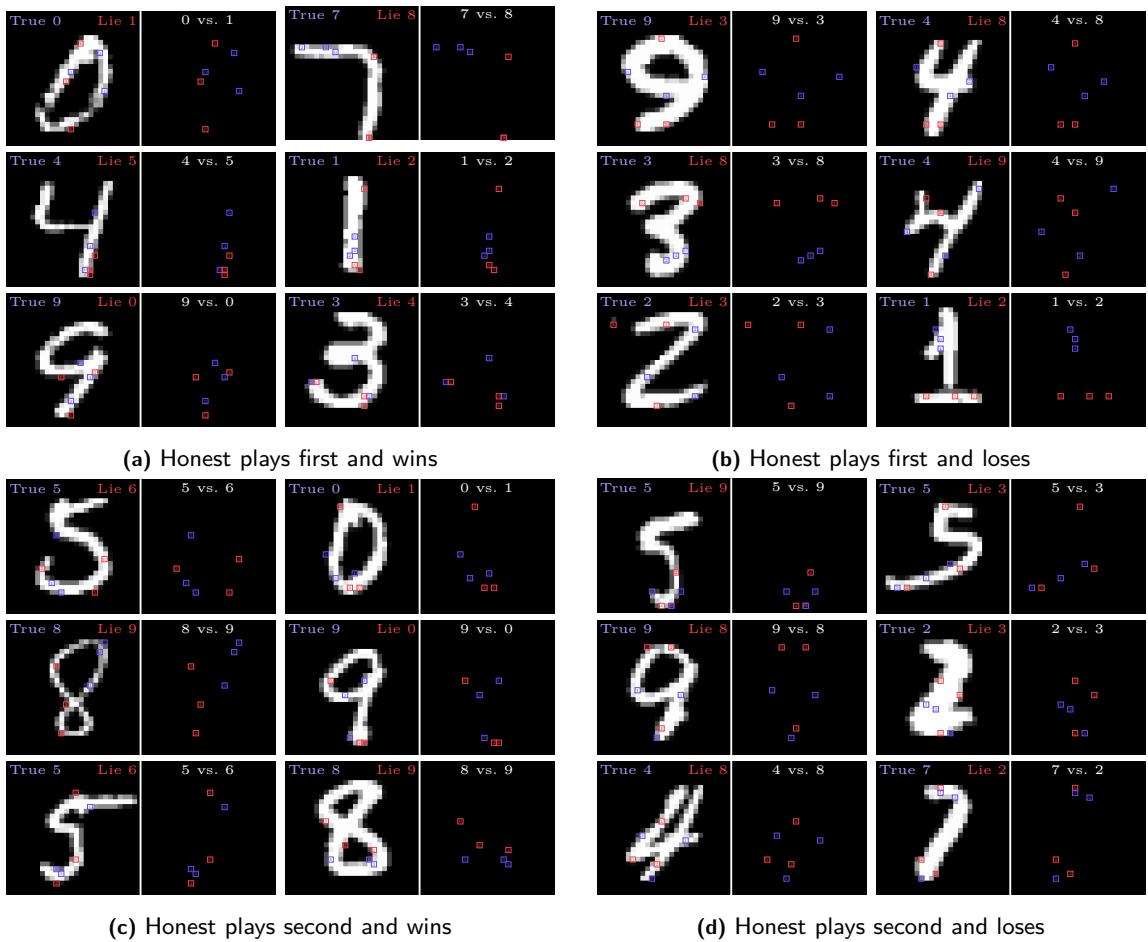}
\caption{Sample 6 pixel MNIST debate games where the liar must precommit to the lie.  \subref{fig:easy0-good1} through \subref{fig:easy1-good0} vary who plays first and who wins between honest player and liar.  Honest moves are shown in green and liar moves in blue, with digits 0 through 5 indicating the move sequence (We apologize to physical paper readers without the ability to zoom).  For each game, we show the debater view on the left and the judge view on the right: the judge sees only the revealed pixels and which class each player is arguing for.  An interactive version of this figure is available at \url{https://blog.openai.com/debate}. \vspace{-.07in}} \label{fig:games}
\end{figure}

Results are shown in \cref{tab:results}, with sample games in \cref{fig:games}.  With 6 pixels, debate with precommit turns a 59.4\% accurate judge into a 88.9\% accurate debate: the judge wins on 88.9\% of the images in the MNIST test set.  With 4 pixels, a 48.2\% accurate judge becomes an 85.2\% accurate debate.  For both 4 and 6 pixels precommit is crucial: the liar does much better if the lie can adapt in different parts of the game tree.  This reflects why short debates are powerful in theory: declaring the argument to be made in advance cuts down the space of arguments that must be covered.  Without the mean over 3 seeds adjustment for precommit, the honest player wins 87.7\% of the time with 6 pixels and 83.8\% with 4 pixels.

\subsection{Human experiment: cat vs.\ dog} \label{sec:human}

To test human and natural language aspects of debate, we can play a similar image debate replacing all ML agents with humans.  Choose a random image of either a cat or a dog, and show the image to both human agents but not the human judge.  The agents engage in a natural language debate, and each agent is allowed to reveal a single pixel of the image to the judge.  They cannot lie about this pixel, but they can reveal only one pixel total over the whole debate.  After the debate, the judge decides who is telling the truth.  A similar game could be played with Wikipedia pages and a budget of one word, again using the unseen Wikipedia page to model extra capabilities not available to the judge.  Note that both agents could choose to be honest purely out of inclination, so it is useful to randomly choose sides so that one agent is forced to lie.

\begin{figure}[t]
\centering
\includegraphics[width=.75\columnwidth]{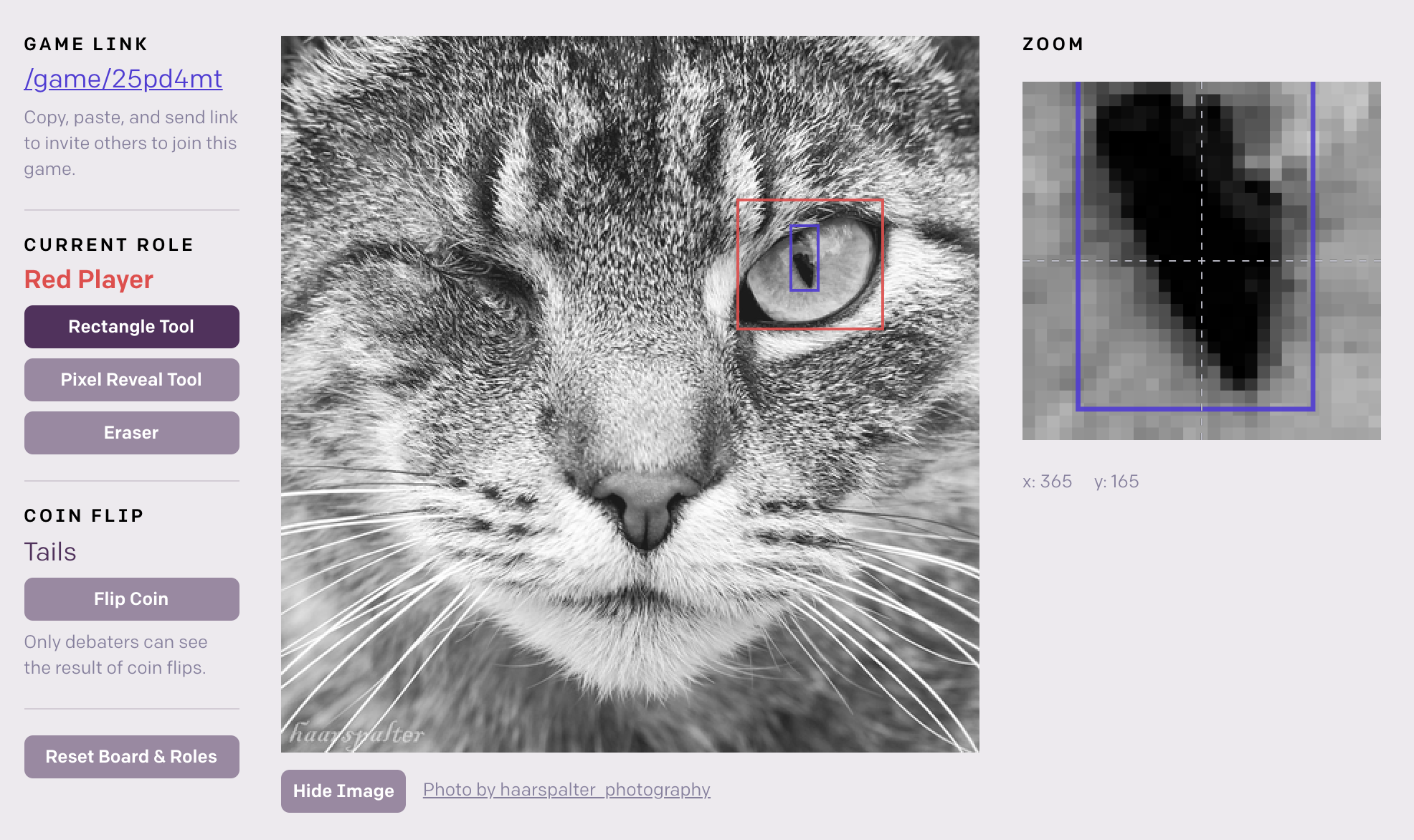}
\caption{A prototype website to test single pixel image debate with humans: \url{https://debate-game.openai.com}.  An image is shown to two debaters but hidden from the judge.  The debaters can talk to the judge and draw rectangles on the image, and each debater is allowed to reveal one pixel to the judge over the whole game.  Image credit: Gideon B\"under, \url{https://www.flickr.com/photos/123640488@N05/40799813725}.} \label{fig:website}
\end{figure}

We have built a prototype website to play the single pixel image debate game with human debaters and judge: \url{https://debate-game.openai.com}.  A screenshot is shown in \cref{fig:website}.  In addition to natural language, the agents can draw rectangles on the image; the judge sees these rectangles on a blank background.  We have used the website to play cat vs.\ dog informally, and as long as agents know how to play it is extremely difficult to lie.  Thus, even if one of the agents is incentivized to lie, the honest agent usually wins.  The one pixel constraint is not serious, as Alice can claim in natural language that a pixel has a certain color.  If Bob knows this is a lie, he can win by spending his one pixel budget, so not spending it is an admission that Alice is telling the truth.  Thus the only limitation is the length of the debate, and whether the human understands it.

For example, Alice might honestly claim the image is a cat, and Bob lies and claims it is a dog.  Alice can say ``The center of this small rectangle is the cat's green eye.''  Bob cannot admit the center is an eye, so he concocts a further lie: ``It's a dog playing in grass, and that's a blade of grass.''  But this lie is hard to square with surrounding facts, such as Alice's reply ``If it were grass there would be green at the top or bottom of this thin rectangle.''  The debate continues until the agents focus in on a particular pixel which they disagree on, but where Bob is unable to invent a plausible counter, at which point Alice reveals the pixel and wins.

We leave formal experiments testing whether honesty wins in single pixel debate to future work.  Even if those experiments are successful, however, it is easy to be skeptical that debate works for complex moral questions while still believing it works for cat vs.\ dog.  In particular, will humans be convincible away from initially wrong moral sentiments?  One possibility is to pick a domain where isolated human answers are prone to bias, then ask how switching from one to two or three step debates changes the level of bias.

\section{Reasons for optimism} \label{sec:optimism}

The theoretical arguments of \cref{sec:complexity} and preliminary experiments of \cref{sec:experiments} do not capture the richness of debate with humans.  The rest of the paper considers the prospects of extending debate to more complex tasks and advanced AI agents, including discussion and thought experiments about natural language debate with human judges.  We start with reasons for optimism, and discuss worries about the safety properties of debate in \cref{sec:worry}.  \Cref{sec:variants} considers variations on debate, and \cref{sec:amplification} discusses the related amplification approach to AI alignment.

\subsection{Agents can admit ignorance} \label{sec:ignorance}

Despite the large complexity class debate can reach in theory, we do not expect the model to solve all problems.  Therefore, it is important to ask what happens for questions too hard to answer, either because the agents do not know or because the debate would take too much time.  Otherwise, we might worry that questions too hard to answer result in misleading or wrong answers, with no way to distinguish these from truth.

\begin{figure}[t]
\centering
\begin{minipage}[b]{.45\columnwidth}
\centering
\begin{tikzpicture}[
  maxnode/.style={circle,draw=black,fill=maxcolor,minimum size=6.5mm,inner sep=0pt,font=\normalsize},
  minnode/.style={circle,draw=black,fill=mincolor,minimum size=6.5mm,inner sep=0pt,font=\normalsize},
  leafnode/.style={circle,draw=black,fill=leafcolor,minimum size=5.5mm,inner sep=0pt,font=\normalsize}
]
\definecolor{maxcolor}{RGB}{139,197,237};
\definecolor{mincolor}{RGB}{217,128,125};
\definecolor{leafcolor}{RGB}{193,148,217};
\definecolor{cutoffcolor}{RGB}{217,54,235};
\node[maxnode,font=\scriptsize] (max0) at (0,0) {$\max$};
\node[minnode,font=\scriptsize] (min0) at (1.7,1.2) {$\min$};
\node[leafnode] (leaf7) at (3.0,1.9) {7};
\draw[line width=1pt,-] (min0) -- (leaf7);
\node[leafnode] (leaf2) at (3.0,0.5) {2};
\draw[line width=1pt,-] (min0) -- (leaf2);
\draw[line width=1pt,-] (max0) -- (min0);
\node[minnode,font=\scriptsize] (min1) at (1.7,-1.2) {$\min$};
\node[maxnode,font=\scriptsize] (max1) at (3.0,-0.5) {$\max$};
\node[leafnode] (leaf5) at (4.3,0.19999999999999996) {5};
\draw[line width=1pt,-] (max1) -- (leaf5);
\node[leafnode] (leaf1) at (4.3,-1.2) {1};
\draw[line width=1pt,-] (max1) -- (leaf1);
\draw[line width=1pt,-] (min1) -- (max1);
\node[leafnode] (leaf4) at (3.0,-1.9) {4};
\draw[line width=1pt,-] (min1) -- (leaf4);
\draw[line width=1pt,-] (max0) -- (min1);
\draw[dashed,cutoffcolor,line width=1pt] (3.7199999999999998,-1.6) -- (3.7199999999999998,0.9) node[above,align=center,font=\scriptsize] {depth\\cutoff};
\node[red,right=.8 of leaf4,align=center,font=\scriptsize] (win) {$\min$ wins:\\$V \le 4$};
\draw[-latex,red,line width=1pt] (win) -- (leaf4);
\end{tikzpicture}
\subcaption{Paths of depth 3 exist, but do not affect $V$.}\label{fig:shallow-tree}
\end{minipage}%
\begin{minipage}[b]{.45\columnwidth}
\centering
\begin{tikzpicture}[
  maxnode/.style={circle,draw=black,fill=maxcolor,minimum size=6.5mm,inner sep=0pt,font=\normalsize},
  minnode/.style={circle,draw=black,fill=mincolor,minimum size=6.5mm,inner sep=0pt,font=\normalsize},
  leafnode/.style={circle,draw=black,fill=leafcolor,minimum size=5.5mm,inner sep=0pt,font=\normalsize}
]
\definecolor{maxcolor}{RGB}{139,197,237};
\definecolor{mincolor}{RGB}{217,128,125};
\definecolor{leafcolor}{RGB}{193,148,217};
\definecolor{cutoffcolor}{RGB}{217,54,235};
\node[maxnode,font=\scriptsize] (max0) at (0,0) {$\max$};
\node[minnode,font=\scriptsize] (min0) at (1.7,1.2) {$\min$};
\node[leafnode] (leaf7) at (3.0,1.9) {7};
\draw[line width=1pt,-] (min0) -- (leaf7);
\node[leafnode] (leaf2) at (3.0,0.5) {2};
\draw[line width=1pt,-] (min0) -- (leaf2);
\draw[line width=1pt,-] (max0) -- (min0);
\node[minnode,font=\scriptsize] (min1) at (1.7,-1.2) {$\min$};
\node[maxnode,font=\scriptsize] (max1) at (3.0,-0.5) {$\max$};
\node[leafnode] (leaf5) at (4.3,0.19999999999999996) {5};
\draw[line width=1pt,-] (max1) -- (leaf5);
\node[leafnode] (leaf1) at (4.3,-1.2) {1};
\draw[line width=1pt,-] (max1) -- (leaf1);
\draw[line width=1pt,-] (min1) -- (max1);
\node[leafnode] (leaf10) at (3.0,-1.9) {10};
\draw[line width=1pt,-] (min1) -- (leaf10);
\draw[line width=1pt,-] (max0) -- (min1);
\draw[dashed,cutoffcolor,line width=1pt] (3.7199999999999998,-1.6) -- (3.7199999999999998,0.9) node[above,align=center,font=\scriptsize] {depth\\cutoff};
\node[blue,above left=.2 and .4 of max1,align=center,font=\scriptsize] (win) {$\max$ wins:\\depth $> 2$};
\draw[-latex,blue,line width=1pt] (win) -- (max1);
\end{tikzpicture}
\subcaption{$V$ is determined by depth 3 paths.}\label{fig:deep-tree}
\end{minipage}
\caption{If a question cannot be resolved with a short debate, it is important that the winning strategy admits ignorance and justifies that ignorance.  In the game tree setting, an example is trees whose values depend on deep paths and are undetermined if restricted to low depth.  Consider a debate about whether the trees shown have value $V \le 4$, with the restriction that we can only play two moves (not three).  \subref{fig:shallow-tree} The $\min$ player can prove $V \le 4$ by playing towards the 4 or the 2, so proving ignorance is impossible.  \subref{fig:deep-tree} $V = 5$, but $\max$ can prove ignorance up to depth 2 by playing towards the deep path and stopping at the second $\max$ node.  $\min$ can only avoid the deep path by playing 10, which is higher.
\label{fig:tree-depth}}
\end{figure}

To preserve honest behavior, the human judge should be instructed to reward agents that admit ignorance and successfully argue that ignorance is justified.  The requirement to successfully argue for ignorance is necessary, since debate is useless if an agent can successfully admit ignorance to any question.  For the best vacation question, Alice could argue the question is hard to answer with confidence because Bali and Alaska are too dissimilar to compare.  To counter this admission of ignorance, Bob would need to exhibit an advantage of Bali not contradicted by an advantage of Alaska the human considers incomparable.

\Cref{fig:tree-depth} illustrates ignorance in the complexity theory setting.  Consider finding optimal play in a variable length game with a debate of length 100.  If the losing player can delay their loss for more than 100 moves, Alice can say ``This game will go on too long to resolve in 100 steps.''  They can then play the game adversarially for 100 steps, with Bob playing to win quickly and Alice playing to delay the loss.  If Bob wins within 100 moves, he has proven that the game can be decided quickly; otherwise Alice has proven the game value is determined by paths which are too deep.\footnote{Go is an example of a game with deep unimportant paths: it is extremely rare for a professional game to last more than 400 moves, but \citet{tromp2006combinatorics} construct artificial games lasting over $10^{47}$ moves.}

There is a balance issue to admitting ignorance: debate is useless if optimal play is to argue we can never know anything for certain.  Failure to strike this balance is measurable only in part: we would know if Alice is admitting ignorance so often that debate is useless, but may not know if Alice should be admitting ignorance but Bob is unable to win by pointing that out.

\subsection{No one is forced to lie} \label{sec:lie}

In contrast to a legal argument or a typical competitive debate, the two players in this game are allowed to choose what they are arguing for, including both arguing for the same thing.  There is no adversary rewarded for fooling the human, which is good because (1) we do not know what that means a priori and (2) we are not explicitly training agents for deception.  Our hope is that the equilibrium has both agents trying their hardest to tell the truth in the most convincing manner, and that convincing means revealing subtle arguments or counterpoints rather than misleading salesmanship.  To encourage punishments for misleading statements, we can instruct the human judge to punish misleading behavior revealed as misleading by the other player, even if the result of the game does not answer the original question.

Another important difference between our intended debate model and typical human debates is that it is sufficient to point out a flaw in \emph{one} of the opponent's arguments, even if multiple arguments were given.  In contrast, competitive debaters are penalized if they fail to respond to all of their opponent's arguments \citep{radiolab2016debate}.  This distinction is needed to make low depth debates powerful, since it enables compressing a large tree of all possible arguments into a single line of most contentious points.  It is possible human judges will find this distinction unintuitive, in which case explicit instruction will be important.  If Bob makes one emotionally convincing point and one less convincing point, and Alice demonstrates that the less convincing point was misleading, the judge may be tempted to still believe the convincing point.

The sufficiency of responding to only one point is subtle if an argument is fundamentally probabilistic, combining several sources of weak evidence into a combined strong argument.  In this case the debaters can be required to state the level of confidence in each point, so that a weak point can only be attacked if it is stated overconfidently.  Experiment will be required before we can be confident such debates work, as probabilistic arguments are often difficult for humans to judge.

Despite the differences, we believe existing adversarial debates between humans are a useful analogy.  Legal arguments in particular include domain experts explaining details of arguments to human judges or juries with no domain knowledge.  A better understanding of when legal arguments succeed or fail to reach truth would inform the design of debates in an ML setting.

\subsection{Debate is self play} \label{sec:selfplay}

Debate is a competitive game between agents, and inherits the advantages of multiagent competition:

\begin{itemize}
\item \textbf{Natural curriculum structure:} If we initialize Alice and Bob randomly, Alice need only be better than a random agent to beat Bob and vice versa.  As Alice gets stronger Bob has a harder task, forcing Bob to get stronger, and so on.
\item \textbf{Existing research:} Self play has produced impressive results for the games of Go \citep{silver2016mastering,silver2017mastering}, chess and shogi \citep{silver2017alphazero}, and Dota 2 \citep{openai2017dota}, and remains an area of active research \citep{bansal2017emergent,lanctot2017unified}.  State of the art generative models for images use competitive training via GANs \citep{goodfellow2014generative,karras2017progressive}.  Debate allows algorithmic and theoretical developments in these areas to carry over to AI alignment.
\end{itemize}

\section{Reasons to worry} \label{sec:worry}

We turn next to several reasons debate could fail as an approach to AI alignment.  These include questions about training target (whether humans are sufficient judges to align debate), capability (whether debate makes agents weaker), our ability to find strong play in practice using ML algorithms, and theoretical and security concerns.  We also discuss issues related to robustness and distributional shift, but we emphasize that we expect other methods to be required alongside debate even if debate works---debate is not designed as a solution to robustness issues.  Our main concern here is whether debate adds additional robustness problems above and beyond those that any method would face.

\subsection{Maybe humans believe whatever they want to hear?} \label{sec:believe}

If Alice opens with a false statement that the human already believes, will any argument by Bob convince them otherwise?  Since a single debate explores only one path through argument space, Bob's argument may involve premises that both Alice and Bob agree on but the human does not.  Unfortunately humans are subject to \emph{belief bias}, the tendency to evaluate arguments based on prior beliefs rather than logical validity.  However, the $\PSPACE$ argument of \cref{sec:complexity} indicates that there is some threshold of judge performance beyond which debate is practically limited only by capacity.  The question is whether humans are above this threshold.  If by default they are not, there are many possible interventions:
\begin{enumerate}
\item A majority vote of several randomly chosen judges may be better than one.
\item People may differ in their judging ability.  If we can detect who is better, we can give them more weight.  Such detection could either compare against stronger debate signals such as majority vote, or use independent tests.  For example, \citet{stanovich1997reasoning} were able to partially predict belief bias by measuring cognitive ability and open mindedness, and \citet{torrens1999individual} correlated lack of belief bias to the ability to generate alternative versions of premises in a reasoning task.
\item People may be better at judging in different situations, or with different types of instructions.  For example, instructions that orient people around \emph{knowledge} rather than \emph{belief} can produce higher scores on science tests \citep{kahan2014evolution}.  \citet{evans2005rapid,andrews2010belief} found evidence that belief bias is influenced by time pressure (though \citet{evans2009reasoning} did not) and \citet{goel2011negative} found (somewhat counterintuitively) that strong negative emotions caused less belief bias.
\item If humans are better at judging debates on some questions than others, it may be sufficient to know which questions we cannot confidently judge.
\item It may be possible to teach people to be better judges, or they may improve with practice.
\end{enumerate}
It is also likely that optimal play by the agents would involve strategies to influence belief bias.  Such strategies would help if it is easier to reduce belief bias than to amplify it, and hurt otherwise.

\subsection{Will humans understand the debates?} \label{sec:understand}

Consider using debate to check a proof which is thousands of pages long and covers mathematics which the human judge has never seen.  Alice claims the proof is wrong, and that Lemma 23.8 on page 1729 is the first mistake.  Bob counters that Lemma 23.8 follows trivially from Theorem 14.2 by Definition 17.6.  The human has no idea what most of the words in these lemmas and definitions mean, so the debate must continue until the point of contention is reduced to a logical inference simple enough for the human to check.  This inference may still involve concepts the human does not know, such as
\begin{enumerate}
\item \label{s0} The free functor from \Set to \Group is the left adjoint of a forgetful functor.
\item Forgetful functors are unique.
\item \label{s1} Left adjoints are unique.
\item \label{s2} The free functor from \Set to \Group is unique.
\end{enumerate}
If the agents agree on \ref{s0}-\ref{s1} but Alice claims \ref{s2} is a lie, a human with basic mathematical knowledge but no category theory can still conclude Alice is wrong and award victory to Bob.  Alice cannot iteratively reject one statement after another, as her initial claim was the location of the \emph{first} flaw.

We expect the above paragraph will leave readers uneasy.  Does this procedure work in practice?  Are humans good at checking logical statements containing words they do not understand?  Will one of the agents be able to sneak in a statement that appears logical but contains a hidden flaw without the other agent pointing out the flaw?  In short, can an actual human play the game well enough that optimal play is honest behavior?

A related worry is that a debate could be long enough that a human is unable to follow it even if each step is checkable in isolation.  We can imagine a debate 100 statements long where the human can only understand sliding windows of 3 statements at a time.  Debates with windowed judges are still powerful in theory: an amateur can judge a Go game by checking locally that the rules are followed and looking at just the final score, and more generally $\DEBATE = \PSPACE$ holds as long as the statements have polynomial size.  However, windowed judging feels less natural, so human judges restricted to windows may be weaker or more error prone.

Fundamentally, whether humans are sufficient judges is an empirical question.  If the answer is no for a particular class of questions, we can further ask if the model fails with an honest admission of ignorance (\cref{sec:ignorance}), or with one of the agents successfully misleading the human.  Honest ignorance is fine; successful lies could be disastrous.

\subsection{Is honesty actually the best policy?} \label{sec:honesty}

Even if humans are unbiased, it is not clear their judgments are sufficiently sophisticated to elicit sophisticated honest answers to complex questions. For example:

\begin{itemize}
\item Many judgments require aggregating across different lines of evidence, while debate explores one line of evidence. We can effectively aggregate by having one player state their summary of the evidence and allowing the other player to challenge any aspect of that summary, ultimately zooming in on a single consideration. This procedure works perfectly when different considerations can be combined by a simple operation like addition, but it is not clear if it yields the right outcome in general.
\item Sophisticated arguments will depend on concepts that the judge cannot understand. When we can work with such concepts mechanically a judge can verify that the mechanical procedure is followed correctly. But human reasoning routinely requires working with complex concepts in ways that we cannot formalize, and it is challenging to have debates about these questions.
\item Sophisticated reasoning may involve processes that humans do not yet understand. For example, it may only be possible for arguments to aggregate different lines of evidence correctly if the judge can understand the mechanics of probabilistic reasoning. Analogously, it is plausible that more complex arguments would depend on machinery that current humans are not familiar with. In order to invoke such machinery, a debater needs to convince the judge that it is sound, which might prove to be impossible.
\end{itemize}

The complexity theoretic analogy suggests that these difficulties can be overcome by a sufficiently sophisticated judge under simple conditions. But that result may not hold up when AI systems need to use powerful but informal reasoning, or if humans cannot formalize their criteria for judgment.  We are optimistic that we can learn a great deal about these issues by conducting debates between humans, in domains where experts have much more time than the judge, have access to a large amount of external information, or have expertise that the judge lacks.

\subsection{Will agents trained for debate lose performance?} \label{sec:explain}

Even if the humans can understand and correctly judge debates by sufficiently strong agents, additional model capacity may be required to play the debate game vs.\ knowing the answer directly.  If so, aligned AI systems using debate will be weaker than AI systems trained in other ways, and debate is less likely to be used.  There are several countervailing reasons for hope:

\begin{itemize}
\item \textbf{Direct training may be harder:} It is often impossible to directly train for the answer without training an auxiliary network to assist.  For example, policy gradient methods use only the policy at test time, but need an auxiliary value network at training time to reduce variance.  Similarly, amplification \citep{christiano2018amplification} trains a module to generate subquestions as part of training an answerer, but only the answerer is needed at test time (see \cref{sec:amplification}).
\item \textbf{Adversarial reflection is a good way to think:} Attempting to construct reasons and counterarguments for a position is a good mechanism for thought.  It is plausible that sufficiently strong ML models would attempt to counter their own arguments internally even if not trained to do so explicitly.  Indeed, normal human thought is often insufficiently adversarial.
\item \textbf{We may not want answers that cannot be explained:} Even if ML models without an alignment mechanism similar to debate are stronger, they may be less trustworthy and thus dangerous to use.  Waiting for strong agents via debate or amplification (\cref{sec:amplification}) would still let us realize most of the value as long as the delay is acceptable.
\end{itemize}

Debate could also be uncompetitive with other ML approaches because debate requires human input.  It may be possible to train complex behavior via self play in a simulated environment only weakly related to human goals (see the \emph{orthogonality thesis} of \citet{bostrom2012superintelligent}), and such an environment may be much faster for generating samples than asking humans questions even if it is unsafe.  We can reduce human preference sample complexity as discussed in \citet{christiano2017deep} and \cref{sec:oversimplifications} by training models of human judges and using those for sampling, but competing with purely simulated environments may still be challenging.

\subsection{Uncertainty about the neighborhood around Nash equilibrium} \label{sec:neighborhood}

If \cref{claim} holds, optimal play in the debate game produces honest, useful statements.  However, as optimal play is unachievable for any practical system, what matters is approximately optimal play.  We can further hope that in a neighborhood around optimal play both agents are trying to be honest and convincing, sometimes missing arguments but not intentionally misleading the human.  This hope is far from a solid argument, though it is likely that the theoretical models in \cref{sec:complexity} can be adapted by introducing randomness so that approximate optimal play can be defined and analyzed.  Whether approximate optimal play in debate is aligned with telling the truth is a distinct question from whether we can find such play with practical optimization algorithms, though the two questions are related.

In any case, behavior in a neighborhood of equilibrium depends critically on the human judge and the instructions to the human, and in particular how much the human values being told a better argument vs.\ having flaws in arguments pointed out.

\subsection{Are equilibria stable during training?} \label{sec:stable}

If we believe the argument that Nash equilibria in debate give aligned AI, it remains to ask whether we can find them with some version of gradient descent.  Although existing self play results give us hope, we do not know of any theory which says why self play should stably converge to optimal play.  For debate, one could imagine bad cycles of the form
\begin{enumerate}
\item Both agents learn to be honest.  Along the way, they also know how to point out flaws.
\item Once honest, they forget how to point out flaws (or at least forget how to point out lies).
\item One of the agents goes back to lying, and wins for a while.
\end{enumerate}
It is sometimes possible to avoid bad cycles with a pool of opponents from different steps of training, but this approach is not guaranteed to work and is not always necessary.  For example, the first AlphaGo result \citep{silver2016mastering} used an opponent pool but later versions do not \citep{silver2017mastering,silver2017alphazero}.

However, stability and convergence questions for competitive training are similar between debate and other games, and debate can take advantage of future theoretical and practical developments.

\subsection{Lack of debate convergence} \label{sec:convergence}

For debate to work the number of statements per debate must be reasonably low, and by the final statement future argument should be unconvincing.  If each new statement swings the human's view to the other side even at the end of the debate, the model is not working.  Fortunately this is a measurable effect: we can ask the human's opinion after different statements to gauge convergence.

\subsection{Security vulnerabilities in the human or code} \label{sec:security}

A sufficiently strong misaligned AI may be able to convince a human to do dangerous things via a text channel \citep{yudkowsky2002box}.  However, we believe it is unlikely that a single short sentence is sufficient for this sort of mind hack.  Thus, we can ameliorate the problem in debate by making the statements short enough that mind hacking requires many of them.  If Alice tries to mind hack, Bob can win by pointing out the malicious attempt.  Successful hacks may be safely detectable at first, such as if an agent learning to mind hack starts to always win when playing first, although this does not cover treacherous turns where the first successful hack frees a misaligned agent \citep{bostrom2017superintelligence}.

Hopefully at equilibrium the agents are not trying to mind hack.  However, there is a dichotomy: we are safer if the agents stay honest throughout training, but we are also safer if debate works well enough that sudden large defections are corrected.  If we are safe only because of the first reason, we are probably not stable under large perturbations, which makes the second reason fail.

Security vulnerabilities in the system's code are similar: a sufficiently strong misaligned Alice could hack the debate environment from inside so that the game is no longer zero sum and both Alice and Bob win forever.  Again Bob could win by pointing out what Alice is trying to do, though we have the same uneasy dichotomy as in the mind hack case.

\subsection{Stochastic simulation} \label{sec:randomness}

\Cref{sec:complexity} showed how to solve any problem in $\PSPACE$ using debate, in particular the problem of running a simulation forwards for an exponential number of steps.  Unfortunately, this reduction works only for deterministic simulation, since it checks an $n$ step simulation by asking for the state after $n/2$ steps and adversarially branching on which interval is wrong.  If each step of the simulation introduces noise, a misaligned Alice could adversarially choose the noise so that each single step is plausible but the final result has extremely low probability.  If our goal is a typical sample from the outcome of the stochastic simulation, the simple protocol fails.

Stochastic simulation does not violate the $\PSPACE$ equivalence, since we make the simulation deterministic by fixing the random seed as part of the question, but the resulting protocol is likely unlearnable by practically realizable agents especially if the environment is chaotic.  In some cases an honest agent could reveal a trajectory as misleading by picking a metric that evolves implausibly, showing that random fluctuations do not cancel as they should.  We do not believe such implausibility witnesses exist in general, however.  Another approach is to argue against a plausible but misleading sample by showing another sample which is also plausible but would indicate a very different answer to the broader question at hand (such as what action to take).  The second sample provides evidence that the question cannot be resolved by samples, but may also be a successful strategy for dishonestly arguing against reasonable samples.  We leave the question of a natural debate protocol for stochastic simulation open.

A related problem is that questions that depend on statistical arguments are forced to have at least logarithmic depth, even for simple statistics such as the mean over a large data set.  As in the prime counting example of \cref{sec:complexity}, the natural debate protocol branches towards smaller and smaller sets, computing the mean of each subset until we reach a singleton.  Besides the forced length, this protocol requires a mechanism for specifying subsets that the human understands, and requires the agents to compute statistics over these subsets.

\subsection{Training superhuman AI to be convincing sounds dangerous!} \label{sec:convincing}

It is not intuitively obvious that training for maximum convincing is safe, as it sounds similar to training for deception.  However, initial attempts to remove our uneasy feelings seem to worsen the model.  For example, one could try to reward both agents if they are both acting honestly, making the game nonzero sum.  However, since we have no a priori definition of honesty, the agents could collude and make statements that sound consistent but are both lies.  Debate leans for correctness on Alice pointing out flaws in Bob's arguments, which requires an adversarial game.  The problem of training agents for deception is not unique to debate: standard RL agents can already learn to fool their reward functions \citep{openai2016boatrace}.

\section{Refinements and variations on debate} \label{sec:variants}

\subsection{Symmetry} \label{sec:symmetry}

The debate model relies on agents being approximately matched in strength: if Alice is too strong she may be able to lie in ways that Bob cannot refute.  Symmetry between the agents' capabilities is easy to achieve, since we can use the same weights for both agents via self play.  However, debate as described has an asymmetry between the first and second player, which could produce a significant first mover advantage or disadvantage.  Fortunately such an advantage would be measurable, and there are several remedies if it occurs:
\begin{itemize}
\item \textbf{Simultaneous moves:} Alice and Bob could make their statements at the same time, so that at step $n$ Alice and Bob both make a statement conditional on statements at steps $n-1, n-2, \ldots$.  The resulting game is manifestly symmetric, but may be noisier or harder to learn (we have replaced optimal play with general Nash equilibria).
\item \textbf{Comparing two games}: Given a question, we can play two debates so that Alice and Bob can be both first and second player.  The human then judges which agent performed better overall.  This game is again manifestly symmetric but requires the human to absorb and judge twice as much information.
\item \textbf{Komi:} In Go, the second player gets $6.5$ extra points to counter the first player advantage.  We could use the same mechanism for debate if the human judges are able to produce smooth scores rather than binary win/loss decisions, though it is unclear how to ask for such scores.  Noisy binary judgements can also act as smooth scores, as long as there is enough noise relative to the first player advantage.
\end{itemize}

\subsection{Sharing activations between agents} \label{sec:activations}

In a typical game of chess or Go, the two players have perfect information about the board but do not have access to each other's thoughts.  Since the players have limited compute, we have some of the character of an imperfect information game, which is unfortunate if the goal is perfect play.

We can rectify this problem in debate by giving each agent access to the activations of the other agent at previous time steps, so that the two agents become one agent competing against itself.  Note that activations from previous steps should be treated as constant features: we do not want Alice to adjust their thoughts to adversarially change Bob's actions via backpropagation.

If we believe perfect play produces aligned behavior, sharing activations should make approximate perfect play easier to learn.  In particular, shared activations make intentional lies trivial to discover if not to refute.  The theoretical picture is the same, since knowing thoughts does not change the perfect information setting for agents with unbounded compute.

\subsection{Different agents each step} \label{sec:many}

Instead of two agents, we could treat each step of the debate as separate agents trying to convince the human that their statements are useful.  To remain zero sum, we let a pair of agents $\textrm{Alice}_n$ and $\textrm{Bob}_n$ make statements $a_n, b_n$ at step $n$.  The judge sees all pairs $(a_0, b_0), \ldots, (a_{n-1}, b_{n-1})$ and judges who won each pair.  The advantage of many agents is that $\textrm{Alice}_7$ is not motivated to defend a lie or unintentional misstatement by $\textrm{Alice}_2$.

A difficulty with many agents is that the human must be able to judge whether a statement in the middle of a debate is good or bad, which may be much harder than judging an overall debate.  For example, it is easy for an amateur Go player to decide who won a game of Go, but hard to know which particular moves were mistakes even once the whole game is known.  The theory of many player games is also more complicated: solving tabular $n > 2$ player games requires the larger complexity class $\PPAD$ vs.\ $\P$ for two player zero sum.  Although algorithms exist to train agents in $n > 2$ player environments \citep{lowe2017multi}, the most impressive results have come from zero sum self play.

If the difficulties of training and judging the $n$ player case are resolved, however, it may prove a stronger model than two players.  In particular, any two player debate protocol can be simulated by the $n$ player pairwise model by giving all $\textrm{Alice}_n$ the same score, but we do not know of a natural way to simulate the general $n$ player protocol with two players.

\subsection{Payoff structure} \label{sec:payoff}

In Go, a player who believes they are behind is motivated to increase the variance of the game, for example by starting a large fight.  Such an action makes the expected final score worse, but increases the tail probability of winning.  Variance increase is rational because only winning or losing matters, not the specific score.  We would expect the same effect in a debate game with a binary win/loss decision.  Variance may be beneficial: a losing agent may search for subtle arguments which have some chance of proving important and winning.  However, misleading or lying may also increase variance, which would be bad.

It may be possible to penalize variance increasing strategies by adjusting the payoff structure, at the cost of requiring more information from the judge.  For example, the judge could award Alice any score $a \in [-1,1]$, and give Bob $b = -a$.  A move that increases the variance but also worsens the mean would be penalized on average.

\section{Approximate equivalence with amplification} \label{sec:amplification}

Debate is closely related to \citet{christiano2018amplification}'s \emph{amplification} model of AI alignment.  Indeed, we found debate while trying to better understand amplification.  We briefly summarize amplification, then describe how it relates to debate.

Amplification is also framed in terms of an ML question/answering system, with the goal of answering questions which are beyond human ability to directly answer.  However, we assume that even if the human cannot answer a question $Q$, they can decompose the question into subquestions $q_k$ such that if they knew the answers $a_k$ to the subquestions they could reconstruct the original answer $A$.  The ML system is called recursively on $q_k$ to produce answers $a_k$, the human reconstructs $A$, and we train the system further on $(Q,A)$.  Since human time is precious, we additionally train modules to produce subquestions and reconstruct answers, for a total of three modules:
\begin{itemize}
\item \textbf{Answerer:} Takes questions $Q$ to answers $A$.
\item \textbf{Questioner:} Takes questions $Q$ to subquestions $q_k$.
\item \textbf{Checker:} Takes $(Q, q_k, a_k)$ to answers $A$.
\end{itemize}
All three components are trained by supervised learning.  In practice questioning and checking can be interleaved, so that e.g.\ $q_2$ could depend on $a_1$, but this does not affect the discussion here.\footnote{\citet{christiano2018amplification} treat the questioner and checker as a single combined module.  We separate them in order to discuss adversarial training of the questioner while still using supervised learning for the checker.}

To summarize debate and amplification:
\begin{itemize}
\item \textbf{Debate:} Two agents alternate in an adversarial setting to convince a human judge.
\item \textbf{Amplification:} One agent is trained on a human combining recursive calls to the agent.
\end{itemize}
Viewed from a complexity theory perspective, these match two different definitions of $\PSPACE$:
\begin{itemize}
\item $\PSPACE =$ polynomial length adversarial games.
\item $\PSPACE =$ polynomial depth recursion.
\end{itemize}
Thus we expect the models to have similar capabilities, at least in theory.  Both models are framed in terms of recursive computations over trees, and thus can benefit from AlphaZero-style iteration.

The equivalence becomes concrete if we contrast the three modules in the debate model (the two debaters and the judge) with the three modules in the amplification model (Answerer, Questioner, and Checker).  The Answerer is analogous to one of the debaters and the Checker is analogous to the judge, but the Questioner differs from a debater in that it is trained via supervised learning on human subquestions rather than adversarially against the Answerer.  Thus, debate has two powerful agents engaged in self play to explain things to a human or human surrogate judge.  Amplification has one powerful agent trained with the help of two human surrogates.  Nevertheless, some small changes can bring the models closer together:
\begin{itemize}
\item We can move amplification closer to debate (and gain the self play property) by training the Questioner adversarially to help the Checker reveal inconsistencies in the Answerer's answers.  
\item We can move debate closer to amplification by training debaters on statements provided by humans, corresponding to injecting demonstrations into RL.
\end{itemize}
The equivalence is far from exact: the feedback for a debate is about the whole game and the feedback for amplification is per step, debate as presented uses reinforcement learning while the easiest versions of amplification use supervised learning, and so on.  However all these features can be adjusted in either direction.

Writing the equivalence in terms of Alice/Bob vs.\ Answer/Questioner highlights an apparent advantage of the debate model: it works even if it is superhumanly difficult to generate subquestions that are strong enough consistency checks.  In particular, this could happen if the branching factor for potentially relevant subquestions is too high.  The ability for debate to handle high branching factor means that shallow debate is more powerful than shallow amplification: a debate about the best vacation can have depth two if the human can compare pairs of locations, while an amplification tree necessarily has $\log$ depth in the number of locations.  The advantage goes away if we inject self play into amplification by adversarially training the Questioner.

The argument also shows that the assumptions each model makes about humans are similar.  Debate assumes lying is harder than arguing against a lie given a human judge, while stock amplification assumes that lying is harder than questioning a lie given a human judge and a human questioner.  The human questioner limit goes away once we inject self play into amplification.

\section{Conclusions and future work} \label{sec:conclusion}

We have described debate as an approach to aligning AI systems stronger than humans, and discussed a variety of theoretical and practical properties of the model.  At this point debate is proposal only for the natural language case, and we have demonstrated only a basic experiment for MNIST images.  Significant research will be required to know whether debate works in practice with human judges.  Much of the required work is empirical, both on the human and ML sides, though we believe further theoretical results are also valuable.  Areas for future work include:

\begin{enumerate}
\item \textbf{Richer theoretical models:} Our discussion of $\DEBATE = \PSPACE$ in the complexity theory setting leaves out many important considerations.  In particular, judges are modeled as limited but free of error, and agents have unlimited computational power.  More refined theoretical models may be able to probe the properties of debate more closely, and act as a complement to experimental results.
\item \textbf{Human experiments that test value judgement:} Does debate with human judges produce aligned behavior even in situations involving moral questions where the judge is biased?  We believe it is possible to test this question without waiting for general dialog agents, and in particular that debate may be applicable to fairness and bias questions.
\item \textbf{ML experiments that approximate the human aspects of debate:} Strong self play results already exist in a variety of games, but there is no theory that says self play works for all games.  In the near term, we would like games without the complexities of natural language that approximate properties of human judges.  The sparse MNIST classifier experiment of \cref{sec:sparse} is one example; we would like others.
\item \textbf{Natural language debate:} As soon as possible, we want to test debate in the natural language setting with real humans.  Even if this is difficult in the case of unrestricted dialog, it may be possible to construct narrower dialog environments that capture more of the flavor of debate and remain tractable for modern ML.
\item \textbf{Interaction between debate and other safety methods:} Debate does not address other safety concerns such as robustness to adversarial examples, distributional shift, or safe exploration.  In particular, the training process for debate could be unsafe even if the final equilibrium is aligned.  We believe other algorithms will be required alongside debate or similar for a complete solution to safety, and it is important to know how the various pieces interact.
\end{enumerate}

More broadly, we now have two proposals for aligning strong agents based on human preferences: amplification and debate.  If there are two there are likely more, especially as amplification and debate are sufficiently similar that properties of one can be moved across to the other.  We encourage the reader to search for others.

\section*{Acknowledgements}

We thank Jan Leike, Rohin Shah, and Victoria Krakovna for comments on initial versions of debate, Joshua Achiam, Chris Olah, and Dylan Hadfield-Manell for help with experiment design, and Catherine Olsson and Julia Galef for helpful conversations about belief bias.  John Schulman and Harri Edwards gave detailed comments on the paper, including suggestions for structural changes. Michael Page, Elena Chatziathanasiadou, and Alex Ray played human-only versions of debate informally.  We had many useful discussions at an AI strategy retreat run by the Future of Humanity Institute in January 2018, in particular with David Manley.  The debate website was built by Robert Lord (\url{https://lord.io}).

\bibliography{references}
\end{document}